\def\relu{\mathrm{ReLU}}
\newcommand{\regionOne}{I}
\newcommand{\regionTwo}{II}
\newcommand{\regionThree}{III}
\newcommand{\regionFour}{IV}
\newcommand{\regionFive}{V}
\newtheorem{problem}{Problem}
\newtheorem*{theorem*}{Theorem}
\newtheorem*{proposition*}{Proposition}
\newcommand\blfootnote[1]{%
  \begingroup
  \renewcommand\thefootnote{}\footnote{#1}%
  \addtocounter{footnote}{-1}%
  \endgroup
}
\author{%
 \Name{Tianqi Cui} \Email{tcui3@jhu.edu}\\
 \addr Department of Chemical and Biomolecular Engineering, Johns Hopkins University, Baltimore, MD, USA
 \AND
 \Name{Thomas Bertalan} \Email{tom@tombertalan.com}\\
 \addr Department of Chemical and Biomolecular Engineering, Johns Hopkins University, Baltimore, MD, USA%
 \AND
 \Name{George Pappas} \Email{pappasg@seas.upenn.edu}\\
 \addr Department of Electrical and Systems Engineering, University of Pennsylvania, Philadelphia, PA, USA%
 \AND
 \Name{Manfred Morari} \Email{
morari@seas.upenn.edu}\\
 \addr Department of Electrical and Systems Engineering, University of Pennsylvania, Philadelphia, PA, USA%
 \AND
 \Name{Yannis Kevrekidis} \Email{yannisk@jhu.edu}\\
 \addr Department of Chemical and Biomolecular Engineering, Johns Hopkins University, Baltimore, MD, USA%
 \AND
 \Name{Mahyar Fazlyab} \Email{mahyarfazlyab@jhu.edu}\\
 \addr Department of Electrical and Computer Engineering, Johns Hopkins University, Baltimore, MD, USA%
}
\begin{document}

\title{Certified Invertibility in Neural Networks via \\ \ Mixed-Integer Programming}

\maketitle

\begin{abstract}
Neural networks are known to be vulnerable to adversarial attacks, which are small, imperceptible perturbations that can significantly alter the network's output. Conversely, there may exist large, meaningful perturbations that do not affect the network's decision (excessive invariance). In our research, we investigate this latter phenomenon in two contexts: (a) discrete-time dynamical system identification, and (b) the calibration of a neural network's output to that of another network.
We examine noninvertibility through the lens of mathematical optimization, where the global solution measures the ``safety" of the network predictions by their distance from the non-invertibility boundary. We formulate mixed-integer programs (MIPs) for ReLU networks and $L_p$ norms ($p=1,2,\infty$) that apply to neural network approximators of dynamical systems. We also discuss how our findings can be useful for invertibility certification in transformations between neural networks, e.g. between different levels of network pruning.
\end{abstract}

\section{Introduction}

Despite achieving high performance in various classification and regression tasks, neural networks do not always guarantee certain desired properties after training. Adversarial robustness is a well-known example, as neural networks can be overly sensitive to carefully designed input perturbations (\cite{szegedy2013intriguing}). This intriguing property also holds in the reverse direction, where neural networks can be excessively insensitive to large perturbations in classification problems. This can cause two semantically different inputs (such as images) to be classified in the same category (\cite{jacobsen2018excessive}).
Indeed, a fundamental trade-off exists between adversarial robustness and excessive invariance (\cite{tramer2020fundamental}), which is mathematically related to the noninvertibility of the input-output map defined by the neural network.

To address the issue of noninvertibility and excessive invariance, one can consider invertible-by-design architectures. Invertible neural networks (INNs) have been used to design generative models (\cite{donahue2019large}), implement memory-saving gradient computation (\cite{gomez2017reversible}), and solve inverse problems (\cite{ardizzone2018analyzing}). However, commonly used INN architectures suffer from exploding inverses. In this paper, we focus on certifying the (possible) non-invertibility of conventional neural networks after training.
We specifically study two relevant invertibility problems: (i) local invertibility of neural networks, where we verify whether a dynamical system parameterized by a neural network is locally invertible around a certain input (or trajectory), and compute the largest region of local invertibility; and (ii) local invertibility of transformations between neural networks, where we certify whether two ``equivalent'' neural networks (e.g. resulting from different levels of pruning) can be transformed (or calibrated) to each other locally via an invertible map.
We develop mathematical tools based on mixed-integer linear/quadratic programming for characterizing non-invertibility, which can be applied to neural network approximators of dynamical systems, as well as transformations between different neural networks.

\paragraph{Related Work} Noninvertibility in neural networks was first studied in the 1990s (\cite{GICQUEL19988,298587}). More recently, several papers have focused on the global invertibility property in neural networks, including works such as \cite{chang2017reversible, teshima2020couplingbased, chen2018neural, 10.5555/3327546.3327578, Jaeger2014ControllingRN}. The invertibility of neural networks has been analyzed (\cite{Behrmann2018AnalysisOI}), and invertible architectures have been developed for applications such as generative modeling (\cite{chen2019residualflows}), inverse problems (\cite{Ardizzone2019AnalyzingIP}), and probabilistic inference (\cite{9298920}).
Some of these networks, such as RevNet (\cite{gomez2017reversible}), NICE (\cite{dinh2015nice}), and real NVP (\cite{dinh2017density}), partition the input domains and use affine or coupling transformations as the forward pass, resulting in nonzero determinants and keeping the Jacobians (block-)triangular with nonzero diagonal elements. Others, like i-ResNet (\cite{behrmann2019invertible}), have no analytical forms for the inverse dynamics, yet their finite bi-Lipschitz constants can be derived. Both methods can guarantee global invertibility. A comprehensive analysis of these architectures can be found in \cite{behrmann2020understanding, song2019mintnet}. However, a theoretical understanding of the expressiveness of these architectures, as well as their universal approximation properties, is still incomplete. Compared to standard networks like multi-layer perceptrons (MLPs) or convolutional neural networks (CNNs), invertible neural networks (INNs) are computationally demanding.
Neural ODE (\cite{chen2018neural}) uses an alternative method to compute gradients for backward propagation, while i-ResNet (\cite{behrmann2019invertible}) has restrictions on the norm of every weight matrix to be enforced during the training process. In most cases, the input domain of interest is a small subset of the whole space. For example, the grey-scale image domain in computer vision problems is $[0, 1]^{H \times W}$, where $H$ and $W$ are the height and width of the images; it is unnecessary to consider the entire$\mathbb{R}^{H \times W}$. We thus focus on {\em local invertibility}: how do we determine if our network is invertible on a given domain, and if not, how do we quantify noninvertibility?

\section{Invertibility Certification of Neural Networks and of Transformations between them}

Here we pose the verification of local invertibility of continuous functions as optimization problems. We then show that for  ReLU networks, this leads to a mixed-integer linear/quadratic program.  For an integer $q \geq 1$, we denote the $L_q$-ball centered at $x_c$ by $\mathcal{B}_q(x_c,r) = \{x \in \mathbb{R}^n \mid \|x-x_c\|_q \leq r\}$ (the notation also holds when $q \rightarrow +\infty$).

\subsection{Invertibility Certification of ReLU Networks via Mixed-Integer Programming} \label{MILP1}

\begin{problem}[Local Invertibility of NNs] \label{problem 1}
	%
	Given a neural network $f: \mathbb{R}^m \mapsto \mathbb{R}^m$ and a point $x_c \in \mathbb{R}^m$ in the input space, we want to find the largest radius $r> 0$ such that $f$ is invertible on $\mathcal{B}_q(x_c,r)$, i.e., $f(x) \neq f(y)$ for all $x,y \in \mathcal{B}_q(x_c,r)$, $x \neq y$. \footnote{Here$f$ has the same domain/co-domain dimension. Our mixed-integer formulation does not require this assumption.}
\end{problem}

\begin{wrapfigure}{r}{0.3 \textwidth}
  \begin{center}
    \includegraphics[width=0.3 \textwidth]{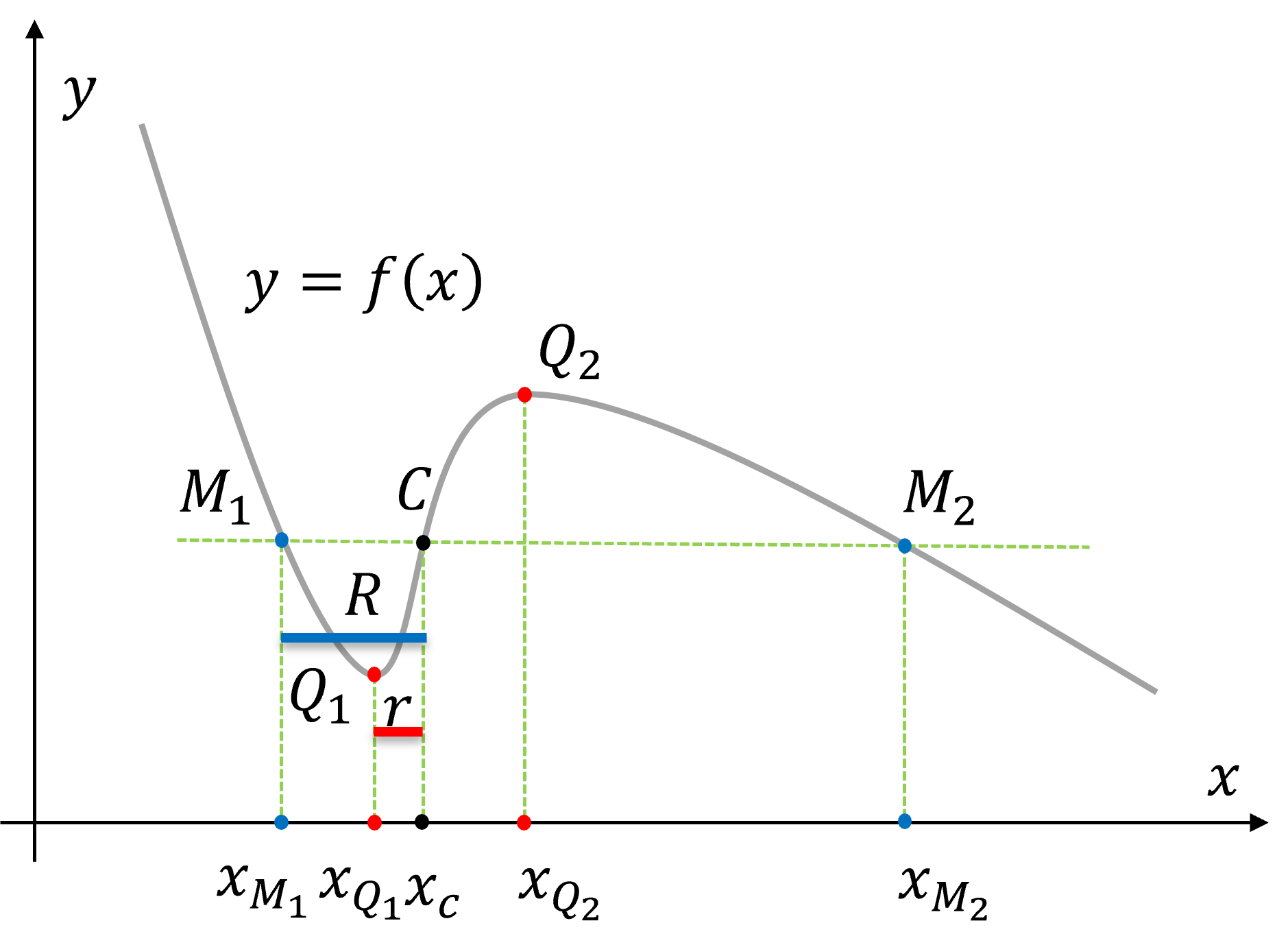}
  \end{center}
  \caption{\small{Illustration of problems 1 and 2 in one dimension.}}
  \label{fig: 1D sketch}
\end{wrapfigure}

Another relevant problem is to verify whether, for a particular point, a nearby point exists with the same forward image. 
We formally state the problem as follows.


\begin{problem}[Pseudo Local Invertibility of NNs] \label{problem 2}
	Given a neural network $f: \mathbb{R}^m \mapsto \mathbb{R}^m$ and a point $x_c \in \mathbb{R}^m$ in the input space, we want to find the largest radius $R> 0$ such that $f(x) \neq f(x_c)$ for all $x \in \mathcal{B}_q(x_c,R)$, $x \neq x_c$.
\end{problem}




If $r$ and $R$ are the optimal radii in Problems \ref{problem 1} and \ref{problem 2} respectively, we must have $r \leq R$.
%
%
%
%
%
For Problem \ref{problem 1}, the ball $\mathcal{B}_q(x_c,r)$ just ``touches'' the $J_0$ set (i.e. the set of points where $f'=0$); for Problem \ref{problem 2}, the ball $\mathcal{B}_q(x_c,R)$ extends to the ``other'' closest preimage of $f(x_c)$. Figure \ref{fig: 1D sketch} illustrates both concepts in the one-dimensional case. For the scalar function $y = f(x)$ and around a particular input $x_c$, we show regions with local invertibility and pseudo invertibility. The points $Q_1= (x_{Q_1}, y_{Q_1})$ and $Q_2= (x_{Q_2}, y_{Q_2})$ are two closest turning points (elements of the $J_0$ set) to the point $C =(x_c, y_c)$; $f$ is uniquely invertible (bi-Lipschitz) on the open interval $(x_{Q_1}, x_{Q_2})$, so that the optimal solution to Problem \ref{problem 1} is: $r = \min \{|x_{Q_1} - x_c|, |x_{Q_2} - x_c|\} = |x_{Q_1} - x_c|$. 
Noting that  $M_1 = (x_{M_1}, y_{M_1})$ and $M_2 = (x_{M_2}, y_{M_2})$ are two closest points that have the same $y$-coordinate as the point $C = (x_c, y_c)$, the optimal solution to Problem \ref{problem 2} is $R = \min \{|x_{M_1} - x_c|, |x_{M_2} - x_c|\} = |x_{M_1} - x_c|$.

%
We now state our first result, posing the local invertibility of a function (such as a neural network) as a constrained optimization problem.
\begin{theorem}[Local Invertibility of Continuous Functions] \label{theorem: local_inv}
	Let $f \colon \mathbb{R}^m \to \mathbb{R}^m$ be a continuous function and $\mathcal{B} \subset \mathbb{R}^m$ be a compact set. Consider the following optimization problem,
\begin{alignat}{2} \label{opt problem 1}
p^\star \leftarrow& \mathrm{max} \quad  && \|x-y\| \quad \text{subject to } x,y \in \mathcal{B}, \quad f(x)=f(y).
\end{alignat}
Then $f$ is invertible on $\mathcal{B}$ if and only if $p^\star =0$.
\end{theorem}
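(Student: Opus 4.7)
The statement is essentially a reformulation of injectivity as an optimization problem, so the plan is to verify both directions by unpacking the definitions, with a small preliminary step to ensure the maximum is well-defined.

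First I would observe that the feasible set $\mathcal{F} = \{(x,y) \in \mathcal{B} \times \mathcal{B} : f(x) = f(y)\}$ is nonempty (the diagonal $x = y$ always lies in it) and is compact: it is a closed subset of the compact set $\mathcal{B} \times \mathcal{B}$, closedness following because $f$ is continuous, so the level condition $f(x) - f(y) = 0$ cuts out a closed set. Since the objective $(x,y) \mapsto \|x-y\|$ is continuous, the extreme value theorem guarantees that $p^\star$ is attained, and moreover $p^\star \geq 0$ by picking any $x = y \in \mathcal{B}$.

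Next I would handle the two implications directly. For the forward direction, assume $f$ is invertible on $\mathcal{B}$, i.e., injective on $\mathcal{B}$. Then any feasible pair $(x,y)$ satisfies $f(x) = f(y)$, which by injectivity forces $x = y$ and hence $\|x - y\| = 0$. Taking the maximum over the feasible set gives $p^\star = 0$. For the converse, suppose $p^\star = 0$. Then for every feasible pair $(x,y) \in \mathcal{F}$ we have $\|x - y\| \leq p^\star = 0$, so $x = y$. Equivalently, whenever $x, y \in \mathcal{B}$ satisfy $f(x) = f(y)$ we must have $x = y$, which is precisely the definition of $f$ being injective (equivalently invertible onto its image) on $\mathcal{B}$.

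There is no real obstacle here: the content of the theorem is essentially that $p^\star$ quantifies the largest violation of injectivity, so the equivalence is immediate once the optimum is shown to be attained. The only minor subtlety is the convention on ``invertibility on $\mathcal{B}$,'' which I would interpret as injectivity on $\mathcal{B}$ (so that the inverse exists as a map from $f(\mathcal{B})$ back to $\mathcal{B}$); if a stronger notion were intended, one could invoke the fact that a continuous injection on a compact set is a homeomorphism onto its image to upgrade injectivity to topological invertibility at no extra cost.
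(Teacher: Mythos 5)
Your proof is correct and follows essentially the same argument as the paper's: invertibility forces every feasible pair to satisfy $x=y$ so the objective vanishes, and conversely $p^\star=0$ forces $x=y$ for all feasible pairs, which is injectivity. Your preliminary check that the feasible set is compact and the maximum is attained is a welcome bit of extra rigor that the paper omits, but it does not change the substance of the argument.
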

%

\begin{theorem}[Pseudo Local Invertibility]  \label{theorem: local_pseudo}
	Let $f \colon \mathbb{R}^m \to \mathbb{R}^m$ be a continuous function and $\mathcal{B} \subset \mathbb{R}^m$ be a compact set. Suppose $x_c \in \mathcal{B}$. Consider the following optimization problem,
	\begin{align} \label{opt problem 2}
		P^\star \leftarrow  \mathrm{max} \quad  \|x-x_c\| \quad
		\text{subject to } x \in \mathcal{B},  \quad f(x)=f(x_c).
	\end{align}
	Then we have $f(x) \neq f(x_c)$ for all $x \in \mathcal{B} \setminus \{x_c\}$ if and only if $P^\star =0$.
\end{theorem}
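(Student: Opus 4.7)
The plan is to mimic the structure of the proof of Theorem \ref{theorem: local_inv} and prove the two directions of the biconditional separately, after first establishing that the optimization problem is well-posed. Since $x_c \in \mathcal{B}$ trivially satisfies $f(x_c) = f(x_c)$, the feasible set $\mathcal{F} := \{x \in \mathcal{B} : f(x) = f(x_c)\}$ is nonempty; it is also compact, being the intersection of the compact set $\mathcal{B}$ with the closed set $f^{-1}(\{f(x_c)\})$ (closed by continuity of $f$). The objective $x \mapsto \|x - x_c\|$ is continuous, so by the extreme value theorem the maximum is attained, and clearly $P^\star \geq 0$ because $x = x_c \in \mathcal{F}$.

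For the forward implication, I will assume that $f(x) \neq f(x_c)$ for every $x \in \mathcal{B} \setminus \{x_c\}$. Then the feasible set reduces to the singleton $\mathcal{F} = \{x_c\}$, on which the objective equals $\|x_c - x_c\| = 0$. Hence $P^\star = 0$.

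For the reverse implication, I will assume $P^\star = 0$ and argue by contrapositive (or equivalently directly). If there existed some $\tilde{x} \in \mathcal{B} \setminus \{x_c\}$ with $f(\tilde x) = f(x_c)$, then $\tilde x \in \mathcal{F}$ and $\|\tilde x - x_c\| > 0$, which would yield $P^\star \geq \|\tilde x - x_c\| > 0$, contradicting $P^\star = 0$. Therefore no such $\tilde{x}$ can exist, which is precisely the stated conclusion.

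There is no real obstacle here: the statement is essentially a tautological rewriting of ``no other preimage of $f(x_c)$ lies in $\mathcal{B}$'' as ``the maximum distance over preimages is zero.'' The only subtle point worth making explicit is the attainment of the maximum, which is why the compactness of $\mathcal{B}$ and the continuity of $f$ hypotheses are needed; without them one would have to state the result using a supremum and handle the possibility of a non-achieving feasible sequence. The argument is strictly simpler than that of Theorem \ref{theorem: local_inv} because only one free variable appears in the optimization, so there is no need to invoke any pairing or symmetry argument on $(x,y)$.
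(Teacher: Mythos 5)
Your proof is correct and follows essentially the same route as the paper's: the forward direction by observing the feasible set collapses to $\{x_c\}$, and the reverse direction by contradiction from a hypothetical second preimage. The added remarks on nonemptiness, compactness of the feasible set, and attainment of the maximum are a welcome (if not strictly necessary) refinement that the paper omits.
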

%
Note that by adding the equality constraint $y=x_c$ to Problem \eqref{opt problem 1}, we obtain Problem \eqref{opt problem 2}. Hence, we will only focus on Problem \eqref{opt problem 1} in the sequel.

\paragraph{Mixed-Integer Formulation of Problem \eqref{opt problem 1}} We now show that for a given ball $\mathcal{B}_{\infty}(x_c,r)$ in the input space, and  piecewise linear networks with ReLU activations, the optimization problem in \eqref{opt problem 1} can be cast as an MILP. 
%
%
We start by noting that a single ReLU constraint $y = \max(0,x)$ with pre-activation bounds  $\underline{x} \leq x \leq \bar{x}$ can be equivalently described by the following mixed-integer linear constraints (\cite{tjeng2017evaluating}),
\begin{align}
	y=\max(0,x), \ \underline{x} \leq x \leq \bar{x} \iff \{y \geq 0, \ y \geq x, y \leq x - \underline{x} (1-t), \ y \leq \bar{x} t, \ t \in \{0,1\}\}, \label{eq: relu_decomp}
\end{align}
where the binary variable $t \in \{0,1\}$ is an indicator of the activation function being active ($y=x$) or inactive ($y=0$). %
%
%
Now consider an $\ell$-layer feed-forward fully-connected ReLU network, 
\begin{align} \label{eq: nn equations}
x^{(k+1)} = \max(W^{(k)} x^{(k)} + b^{(k)},0) \text{ for } k=0,\cdots,\ell-1; \
f(x^{(0)}) = W^{(\ell)} x^{(\ell)} + b^{(\ell)},
\end{align}
%
%
where $x^{(k)} \in \mathbb{R}^{n_k}$ ($n_0=m$),  $W^{(k)} \in \mathbb{R}^{n_{k+1} \times n_k},b^{(k)} \in \mathbb{R}^{n_{k+1}}$ are the weight matrices and bias vectors of the affine layers. We denote $n = \sum_{k=1}^{\ell} n_{k}$ the total number of neurons.  
%
%
%
%
Suppose $l^{(k)}$ and $u^{(k)}$ are known elementwise lower and upper bounds on the input to the $(\ell+1)$-th activation layer, i.e., $l^{(k)} \leq W^{(k)} x^{(k)} +b^{(k)}\leq u^{(k)}$. Then the neural network equations are equivalent to a set of mixed-integer constraints as follows,
\begin{align} \label{eq:MIL_NN}
	x^{(k+1)} \!=\! \max(W^{(k)} x^{(k)} + b^{(k)},0) \Leftrightarrow \begin{cases}
		x^{(k+1)} \geq W^{(k)} x^{(k)} + b^{(k)} \\ 
		x^{(k+1)} \leq W^{(k)} x^{(k)} + b^{(k)} - l^{(k)} \odot (\mathrm{1}_{n_{k+1}}-t^{(k)}) \\
		x^{(k+1)} \geq 0, \quad x^{(k+1)} \leq u^{(k)} \odot t^{(k)},
	\end{cases} 
\end{align}
where $t^{(k)} \in \{0,1\}^{n_{k+1}}$ is a vector of binary variables for the $(k+1)$-th activation layer and $\mathrm{1}_{n_{k+1}}$ denotes vector of all $1$'s in $\mathbb{R}^{n_{k+1}}$. We note that the element-wise pre-activation bounds $\{ l^{(k)} , u^{(k)} \}$ can be precomputed by, for example, interval bound propagation or linear programming, assuming known bounds on the input of the neural network (\cite{weng2018towards, zhang2018efficient,hein2017formal,wang2018efficient,wong2018provable}). 
Since the state-of-the-art solvers for mixed-integer programming are based on branch $\&$ bound algorithms (\cite{bandb, 10.5555/247975}), tight pre-activation bounds will allow the algorithm to prune branches more efficiently and reduce the total running time.

\begin{align} \label{eq: local invertibility MIP}
p^\star & \leftarrow \mathrm{max} \ w 
\text{ subject to }  \|x^{(0)}-x_c\|_{\infty} \leq r, \notag \ \ \text{} \|y^{(0)}-x_c\|_{\infty} \leq r \\ 
\mathrm{(I)}: & \begin{cases}
(x^{(0)}-y^{(0)}) \leq  w \mathrm{1}_{n_0} \leq (x^{(0)}-y^{(0)}) + 4r(\mathrm{1}_{n_0}-F) \\
-(x^{(0)}-y^{(0)}) \leq  w \mathrm{1}_{n_0} \leq -(x^{(0)}-y^{(0)}) + 4r(\mathrm{1}_{n_0}-F') \\
F + F' \leq \mathrm{1}_{n_0}, \mathrm{1}_{n_0}^\top (F+F') =1, 
 F,F' \in \{0,1\}^{n_0}
\end{cases} \notag \\ 
\mathrm{(II)}: & \ W^{(\ell)} x^{(\ell)} = W^{(\ell)} y^{(\ell)} \\ \notag
& \text{for } k=0,\cdots,\ell-1: \\ \notag
\mathrm{(III)}: & 
\begin{cases}
    x^{(k+1)} \geq W^{(k)} x^{(k)} + b^{(k)}, y^{(k+1)} \geq W^{(k)} y^{(k)} + b^{(k)} \\ 
    x^{(k+1)} \leq W^{(k)} x^{(k)} + b^{(k)} - l^{(k)} \odot (1-t^{(k)}), y^{(k+1)} \leq W^{(k)} y^{(k)} + b^{(k)} - l^{(k)} \odot (1-t^{(k)}) \\
    x^{(k+1)} \geq 0, y^{(k+1)} \geq 0,
    x^{(k+1)} \leq u^{(k)} \odot t^{(k)}, y^{(k+1)} \leq u^{(k)} \odot t^{(k)}; t^{(k)},s^{(k)} \in \{0,1\}^{n_k + 1},
\end{cases} \notag 
\end{align} 
Having represented the neural network equations by mixed-integer constraints, it remains to encode the objective function $\|x^{(0)}-y^{(0)}\|$ as well as the set $\mathcal{B}$.
We assume that $\mathcal{B}$ is an $L_\infty$ ball around a given point $x_c$, i.e., $\mathcal{B} = \mathcal{B}_{\infty}(x_c,r)$. Furthermore, for the sake of space, we only consider $L_\infty$ norms for the objective function.  Specifically, consider the equality $w = \|x^{(0)}-y^{(0)}\|_{\infty}$. This equality can be encoded as mixed-integer linear constraints by introducing $2n_0$ mutually exclusive indicator vectors($F$ and $F'$ each with $n_0$ coordinates). This would lead to the MILP in \eqref{eq: local invertibility MIP}, where the set of constraints in $\mathrm{(I)}$ model the objective function $\|x^{(0)}-y^{(0)}\|_{\infty}$, and the set of constraints $\mathrm{(III)}$ encodes $x^{(k+1)}=\max(W^{(k)} x^{(k)}+b^{(k)},0)$ and $y^{(k+1)}=\max(W^{(k)} y^{(k)}+b^{(k)},0)$ which is exactly \eqref{eq:MIL_NN}. The constraint $\mathrm{(II)}$ enforces $f(x^{(0)}) = f(y^{(0)})$ which can be inferred from \eqref{eq: nn equations}.
To see the correctness of $\mathrm{(I)}$, suppose $F_j=1$ for some $j = 1, \cdots, n_0$. Then, we must have $F'_i = 0$ for $\forall i = 1, \cdots, n_0$ and $F_i = 0$ for $\forall i \neq j$. This implies $w = (x_j^{(0)}-y_j^{(0)}) \geq (x_i^{(0)}-y_i^{(0)})$ for $\forall i\neq j$, and $w \geq -(x_i^{(0)}-y_i^{(0)})$ for $\forall i$. A similar argument can be made when $F'_j=1$ for some $j=1,\cdots,n_0$.
The optimization problem \eqref{eq: local invertibility MIP} has a total of $2(n_0+n)$ integer variables.

%
\begin{remark}
    Using the $\ell_2$ norm for both the objective function and the ball $\mathcal{B}_2(x_c,r)$, leads to a mixed-integer quadratic program (MIQP). However, \eqref{eq: local invertibility MIP} remains an MILP in the $\ell_1$ norm case.
\end{remark}





\subparagraph{Largest Region of Invertibility (Problem \ref{problem 1})}For a fixed radius $r \geq 0$, the optimization problem \eqref{eq: local invertibility MIP} either verifies whether $f$ is invertible on $\mathcal{B}_{\infty}(x_c,r)$ or it finds counter examples $x^{(0)} \neq y^{(0)}$ such that $f(x^{(0)})=f(y^{(0)})$. Thus, we can find the maximal $r$ by performing a bisection search on $r$. 




To close this section, we consider the problem of invertibility certification in transformations between two functions (and in particular neural networks).

\subsection{Invertibility Certification of Transformations between Neural Networks} \label{MILP2}

Training two neural networks for the same regression or classification task practically never gives identical networks.
%
Numerous criteria exist for comparing the performance of different models (e.g. accuracy in classification, or mean-squared loss in regression). Here we explore whether two different models {\em can be calibrated to each other} (leading to a {\em de facto} implicit function problem). 
Extending our analysis provides invertibility 
guarantees for the transformation from output of network 1 to output of network 2.

\begin{problem}[Transformation Invertibility] \label{problem 3}
	Given two functions $f_1,f_2 \colon \mathbb{R}^m \to \mathbb{R}^m$ (e.g. two neural networks) and a particular point $x_c \in \mathbb{R}^m$ in the input space, we would like to find the largest ball $\mathcal{B}_q(x_c,r)$ over which $f_2$ is a function of $f_1$.
\end{problem}

\begin{theorem}  \label{theorem: trans}
	Let $f_1 \colon \mathbb{R}^m \to \mathbb{R}^n$, $f_2 \colon \mathbb{R}^m \to \mathbb{R}^n$ be two continuous functions and $\mathcal{B} \subset \mathbb{R}^m$ be a compact set. 
	Then  $f_2$ is a function of $f_1$ on $\mathcal{B}$ if and only if $p_{12}^\star = 0$, where
    \begin{align} \label{opt problem 3}
		p_{12}^\star \leftarrow  \mathrm{max} \quad  \|f_2(x^{(1)}) - f_2(x^{(2)})\| \quad
		\text{subject to } x^{(1)}, x^{(2)} \in \mathcal{B},  \quad f_1(x^{(1)}) = f_1(x^{(2)}).
	\end{align}
\end{theorem}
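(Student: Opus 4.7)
The plan is to interpret the phrase ``$f_2$ is a function of $f_1$ on $\mathcal{B}$'' in the standard factorization sense: there exists $g\colon f_1(\mathcal{B}) \to \mathbb{R}^n$ with $f_2(x) = g(f_1(x))$ for every $x \in \mathcal{B}$. The first step is to observe that such a $g$ exists if and only if $f_1$ is a \emph{refinement} of $f_2$ on $\mathcal{B}$, i.e.,
\begin{align*}
\forall\, x^{(1)}, x^{(2)} \in \mathcal{B}: \quad f_1(x^{(1)}) = f_1(x^{(2)}) \;\Longrightarrow\; f_2(x^{(1)}) = f_2(x^{(2)}).
\end{align*}
One direction is immediate by substitution, and the other defines $g$ pointwise on $f_1(\mathcal{B})$ by picking any preimage. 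Once this equivalence is in place, the theorem reduces to showing that this implication is equivalent to $p_{12}^\star = 0$.

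Before proving the equivalence, I would verify that the optimization \eqref{opt problem 3} is well-posed. The feasible set $S = \{(x^{(1)},x^{(2)}) \in \mathcal{B}\times\mathcal{B} : f_1(x^{(1)}) = f_1(x^{(2)})\}$ is nonempty (it contains the diagonal $\{(x,x) : x \in \mathcal{B}\}$), closed as the preimage of $\{0\}$ under the continuous map $(x^{(1)},x^{(2)}) \mapsto f_1(x^{(1)}) - f_1(x^{(2)})$, and hence compact since $\mathcal{B}\times\mathcal{B}$ is compact. The objective $(x^{(1)},x^{(2)}) \mapsto \|f_2(x^{(1)}) - f_2(x^{(2)})\|$ is continuous, so the supremum is attained, and in particular $p_{12}^\star \geq 0$ with $p_{12}^\star = 0$ if and only if the objective vanishes on all of $S$.

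For the forward direction, assume $f_2 = g \circ f_1$ on $\mathcal{B}$. Any feasible $(x^{(1)},x^{(2)}) \in S$ satisfies $f_1(x^{(1)}) = f_1(x^{(2)})$, hence $f_2(x^{(1)}) = g(f_1(x^{(1)})) = g(f_1(x^{(2)})) = f_2(x^{(2)})$, so the objective is identically zero on $S$ and $p_{12}^\star = 0$. For the converse, suppose $p_{12}^\star = 0$. Then for every $(x^{(1)},x^{(2)}) \in S$ we have $\|f_2(x^{(1)}) - f_2(x^{(2)})\| = 0$, i.e., $f_2(x^{(1)}) = f_2(x^{(2)})$ whenever $f_1(x^{(1)}) = f_1(x^{(2)})$. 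This is precisely the refinement condition, so $g\colon f_1(\mathcal{B}) \to \mathbb{R}^n$ defined by $g(y) := f_2(x)$ for any $x \in \mathcal{B}$ with $f_1(x) = y$ is well defined and satisfies $f_2 = g \circ f_1$ on $\mathcal{B}$.

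The proof is essentially a definition-chase, so there is no serious obstacle; the only point that requires care is explicitly articulating the ``factorization'' meaning of ``$f_2$ is a function of $f_1$,'' which is not spelled out in the theorem statement. I would also remark that the argument is structurally identical to the proof of Theorems \ref{theorem: local_inv} and \ref{theorem: local_pseudo}: invertibility of $f$ on $\mathcal{B}$ is the special case in which $f_1 = f$ and $f_2 = \mathrm{id}$, since then ``$\mathrm{id}$ is a function of $f$'' means $f(x^{(1)}) = f(x^{(2)})$ forces $x^{(1)} = x^{(2)}$. This unifying observation is worth mentioning in passing but is not needed for the proof itself.
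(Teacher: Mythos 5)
Your proof is correct and follows essentially the same route as the paper's: both reduce the statement to the equivalence between the factorization $f_2 = g \circ f_1$ and the condition that $f_2$ is constant on each preimage set $f_1^{-1}(y_1)$ (your ``refinement'' condition), and then observe that the latter is exactly what $p_{12}^\star = 0$ asserts. Your added check that the feasible set is compact and the supremum attained is a worthwhile refinement the paper omits, but it does not change the argument.
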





\noindent Similar to Problem \ref{problem 1}, we can pose Problem \ref{problem 3} as a mixed-integer program. Furthermore, we can also define $p_{21}^\star$, whose zero value verifies whether $f_1$ is a function of $f_2$ over $ \mathcal{B}$. It is straightforward that $p_{12}^\star = p_{21}^\star = 0$ if and only if $f_2$ is an invertible function of $f_1$.

\section{Local Invertibility of Dynamical Systems and Neural Networks}

Noninvertibility can lead to catastrophic consequences not only in classification but also in regression, particularly in dynamical systems prediction. The flow of smooth differential equations is invertible when it exists, yet traditional numerical integrators used to approximate them can be noninvertible. Neural network approximations of the corresponding map also suffer from this potential pathology. Here, we study non-invertibility in the context of dynamical systems predictions.


Continuous-time dynamical systems, in particular autonomous ordinary differential equations (ODEs) have the form $dX(t) / dt = f(X(t)), X(t = t_0) = X_0$,
where $X(t) \in \mathbb{R}^m$ are the state variables of interest; 
$f: \mathbb{R}^m \mapsto \mathbb{R}^m$ relates the states to their time derivatives; $X_0 \in \mathbb{R}^m$ is the initial condition at $t_0$. 
If $f$ is uniformly Lipschitz continuous in $X$ 
and continuous in $t$, 
the Cauchy-Lipschitz theorem provides the existence and uniqueness of the solution. 
%

In practice, we observe the states $X(t)$ at discrete points in time, starting at $t_0 = 0$. 
For 
a fixed timestep $\tau \in \mathbb{R}^+$, and  $\forall n \in \mathbb{N}$, 
$t_n = n \tau$ denotes the $n$-th time stamp, 
and $X_n = X(t = t_n)$ the corresponding state values. 
Now we will have:
\begin{equation}
    X_{n + 1} := F(X_n) = X_n + \int_{t_n}^{t_{n + 1}} f(X(t)) dt; \ X_n = F^{-1} (X_{n + 1}). \label{odeint}
\end{equation}
This equation also works as the starting point of many numerical ODE solvers. 

For the time-one map in \eqref{odeint}, 
the inverse function theorem provides a sufficient condition for its invertibility:
%
If $F$ is a continuously differentiable function from an open set $\mathcal{B}$ of $\mathbb{R}^m$ into $\mathbb{R}^m$, 
and the Jacobian determinant of $F$ at $p$ is nonzero,
then $F$ is invertible near $p$. Thus, 
%
%
if we define the {\em noninvertibility locus} as the set $J_0(F) = \{p \in \mathcal{B}:$ $  \det(\mathbf{J}_F(p)) = 0 \}$; then the condition $J_0(F) = \emptyset$ 
guarantees global invertibility of $F$ (notice that this condition is not necessary: the scalar function $F(X) = X^3$ provides a counterexample). 
%
If $F$ is continuous over $\mathcal{B}$ but not everywhere differentiable,
then the definition of $J_0$ set should be altered to:
\begin{equation}
J_0(F) = \left\{p \in \mathcal{B}: \forall N_0(p), \exists\, p_1, p_2 \in N_0(p), p_1 \neq p_2, \text{ s.t. } \det(\mathbf{J}_F(p_1)) \det(\mathbf{J}_F(p_2)) \leq 0 \right\}. \label{eq: J0_def_ext}
\end{equation}

\subparagraph{Numerical Integrators are (often) Noninvertible}
Numerically approximating the integral in \eqref{odeint} can introduce noninvertibility in the transformation. A simple one-dimensional illustrative ODE example is $f(X) \!=\! X^2 + bX + c, \ X(t = 0) = X_0$, where $b, c \in \mathbb{R}$ are two fixed parameters. 
Although the analytical solution \eqref{odeint} is invertible, a forward-Euler discretization with step $\tau$ gives
\begin{equation}
    X_{n + 1} = F(X_n) = X_n + \tau(X_n^2 + bX_n + c) \Rightarrow
    \tau X_n^2 + (\tau b + 1) X_n + (\tau c - X_{n + 1}) = 0. \label{euler_1d_inv}
\end{equation}
Given a fixed $X_{n + 1}$, Equation \eqref{euler_1d_inv} is quadratic w.r.t. $X_n$; this 
determines the local invertibility of $F$ based on $\Delta = (\tau b + 1)^2 - 4 \tau (\tau c - X_{n + 1})$: no real root if $\Delta < 0$; one real root with multiplicity 2 
if $\Delta = 0$; and two distinct real roots
if $\Delta > 0$. In practice, one uses small timesteps $\tau \ll 1$ for accuracy/stability,
leading to the last case: there will always exist a solution $X_n$ close to $X_{n + 1}$, and a second preimage, far away from the region of our interest, and arguably physically irrelevant (to $X_n \rightarrow -\infty$ as $\tau \rightarrow 0$).
On the other hand, as $\tau$ grows, 
the two roots move closer to each other, 
$J_0(F)$ moves close to the regime of our simulations, and noninvertibility can have visible implications on the predicted dynamics. 
Thus, choosing a small timestep in explicit integrators guarantees desirable accuracy, and simultaneously {\em practically} mitigates noninvertibility pathologies in the dynamics.

\section{Numerical Experiments}


We now present experiments with $\relu$  multi-layer perceptrons (MLPs) in regression problems, and also transformations between two $\relu$ networks.
To solve the Mixed-integer programs we use \cite{gurobi}. To find the pre-activation bounds, we use interval bound propagation.


\paragraph{1D Example}  
We use a 1-10-10-1 randomly generated fully-connected neural network $f$ with $\relu$ activations. We find the largest interval around the points $x=-1.8,-1,-0.3$ on which $f$ is invertible (Problem \ref{problem 1}), and the largest interval around the point $x=-1$ on which any other points inside the region will not map to $f(-1)$ (Problem \ref{problem 2}). The results are plotted in the inset of Figure \ref{fig: 1D example}, where intervals in red and blue respectively represent the optimal solutions for the two problems. The computed largest certified radii are 0.157, 0.322, 0.214, and 0.553.

\begin{figure}[H]
	\centering
	\includegraphics[width=0.3\linewidth]{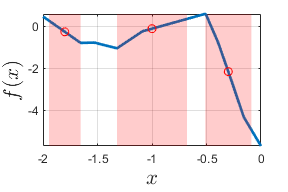}
	\includegraphics[width=0.3\linewidth]{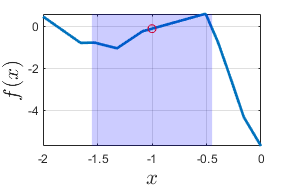}
	\caption{\small{
        Solutions to
        Problem \ref{problem 1} (left, red)
        and Problem \ref{problem 2} (right, blue)
        for the MLP corresponding to a randomly-generated ReLU network (see text).
    }}
	\label{fig: 1D example}
\end{figure}

\paragraph{2D Example: the Brusselator Model} The Brusselator (\cite{doi:10.1063/1.1679748}) is a two-variable $(x, y)$ ODE system  depending on parameters $(a, b)$, that describes oscillatory dynamics in a theoretical chemical reaction scheme. We use its forward-Euler discretization 
\begin{equation}  
    x_{n + 1} = x_n + \tau (a + x_n^2 y_n - (b + 1) x_n), \ y_{n + 1} = y_n + \tau (bx_n - x_n^2 y_n). 
\label{ode_approx}
\end{equation}
%
Rearranging the equation of $y_{n + 1}$ to solve for $y_n$ in \eqref{ode_approx} and substituting it into the one of $x_{n + 1}$ we obtain:
\begin{equation}
    \tau (1 - \tau) x_n^3 + \tau (\tau a - x_{n + 1} - y_{n + 1}) x_n^2 + (\tau b + \tau - 1) x_n + (x_{n + 1} - \tau a) = 0. \label{cubic}
\end{equation}
Equation \eqref{cubic} is a cubic for $x_n$ given $(x_{n + 1}, y_{n + 1})$ when $\tau \neq 1$.
By varying the parameters $a$, $b$ and $\tau$, 
we see the past states $(x_n, y_n)^T$ 
(also called ``inverses'' or ``preimages'')
may be multi-valued,
so that this discrete-time system is, in general, noninvertible. 
We fix $a = 1$ and consider how inverses will be changing (a) with $b$ for fixed $\tau = 0.15$; and (b) with  $\tau$, for fixed $b = 2$. 

In general, the neural network we are interested in is a mapping from 3D to 2D: $(x_{n + 1}, y_{n + 1})^T \approx \mathcal{N}(x_n, y_n; p)^T$, where 
$p \in \mathbb{R}$ is the parameter. 
The network dynamics will be parameter-dependent if we set $p \equiv b$, or timestep-dependent if $p \equiv \tau$. 
Considering the first layer of a MLP:
\begin{equation}
    W^{(0)} \begin{bmatrix}
    x_n \\
    y_n \\
    p \\
    \end{bmatrix} + b^{(0)} = (W^{(0)} (e_1 + e_2)) \begin{bmatrix}
    x_n \\
    y_n \\
    \end{bmatrix} + (p W^{(0)} e_3 + b^{(0)}), \label{eq: 3d_2d}
\end{equation}
where $e_{1, 2, 3} \in \mathbb{R}^3$ are indicator vectors. 
For fixed $p$ our network $\mathcal{N}$ can be thought of as an MLP mapping from $\mathbb{R}^2$ to $\mathbb{R}^2$, by slightly modifying the weights and biases in the first linear layer. Here, we trained two separate MLPs, with $b$ and $\tau$ dependence respectively.

\begin{figure}[H]
    \centering
    \includegraphics[width=0.6\textwidth]{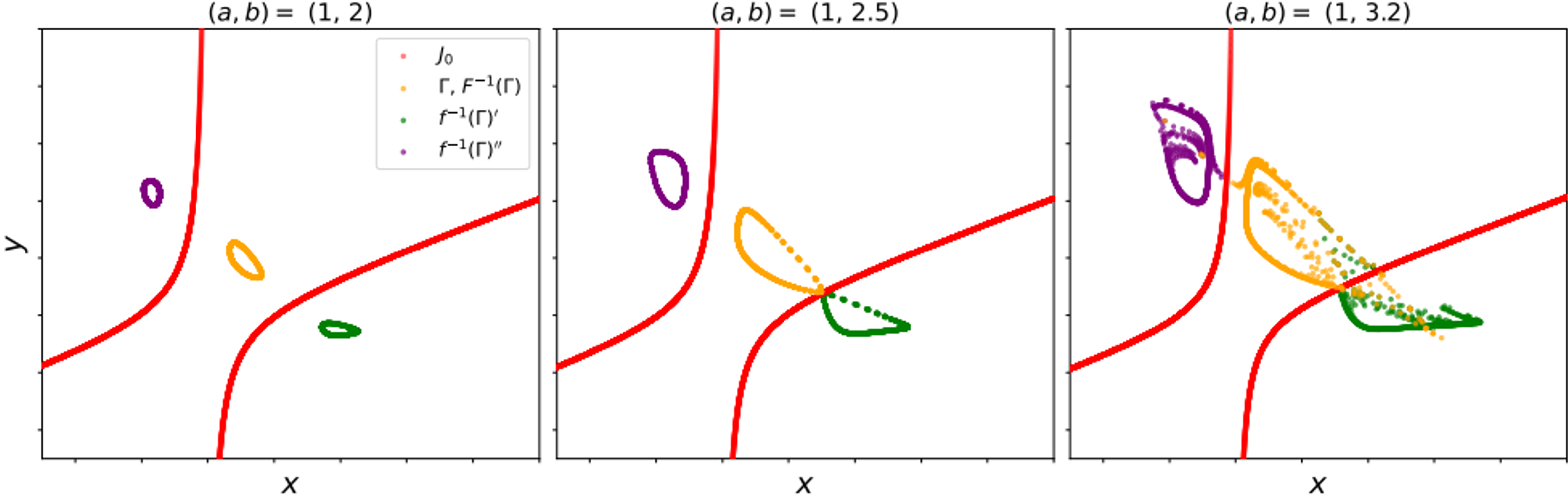}
    \caption{\small{Attractors and their multiple inverses for several parameter values of the Brusselator model. Notice the relation of the $J_0$ curves and the ``extra" preimages. When the attractor starts interacting with the $J_0$ and these extra preimages, the dynamic behavior degenerates quantitatively and qualitatively.}}
    \label{fig:inv}
\end{figure}

\begin{wrapfigure}{r}{0.5\textwidth}
    \centering

    \includegraphics[width=\linewidth]{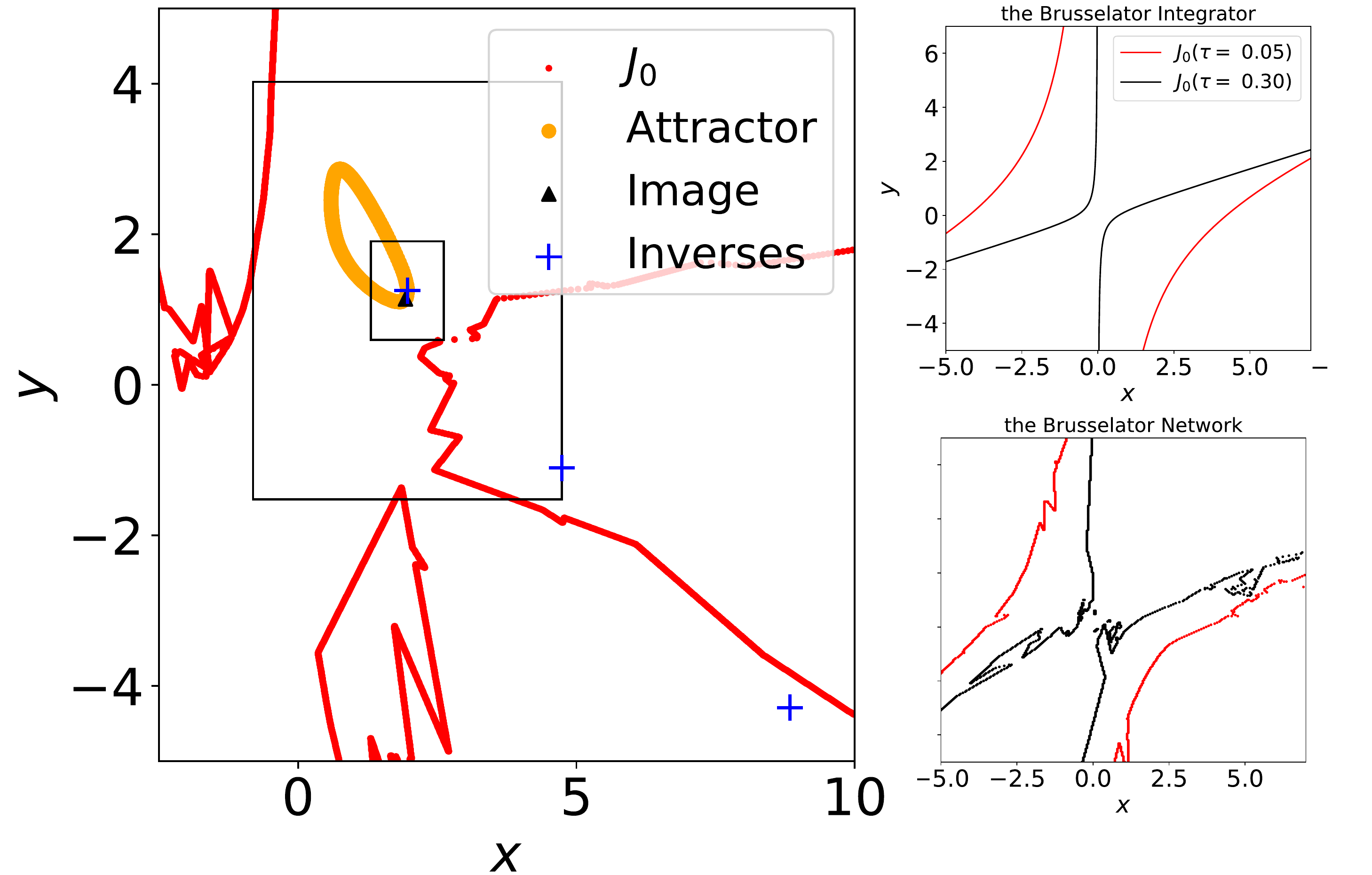}
    \caption{\small{\textbf{Left:} illustration of our solution to Problems \ref{problem 1} and \ref{problem 2} for the Brusselator network. For a random reference point on the attractor, we show the neighborhoods found by our algorithms. They clearly find the closest point on the $J_0$ curve / the closest ``extra preimage" of the point of interest.  \textbf{Right}: plots of $J_0$ curves at different $\tau$, for both the Euler integrator \textbf{(Top)} and our Brusselator ReLU network \textbf{(Bottom)}. Small timesteps lead to progressively remote $J_0$ curves. 
Notice also the piecewise linear nature of the $J_0$ curve for the ReLU network; its accurate computation is an interesting problem by itself.}}
    \label{2_inv}
\end{wrapfigure}

\subparagraph{Parameter-Dependent Inverses}
We start with a brief discussion of the dynamics and noninvertibility in the ground-truth system (see Figure \ref{fig:inv}). 
Consider an initial state located on the Brusselator attracting invariant circle (IC, in orange); we know this has at least one preimage {\em also on this IC}. 
In Figure \ref{fig:inv} we see that every point on the IC has three preimages: one still on the IC, and two additional inverses (in green and purple);
after one iteration, all three loops map to the orange one, and then remain forward invariant.
The phase space {\em folds} along the two branches of the $J_0$ curve (shown in red).
For lower values of $b$ (left), these three closed loops  do not intersect each other. As $b$ increases the (orange) attractor will become tangent to (center), and subsequently intersect $J_0$ (right), leading to mixing of the preimages.
At this point 
the predicted dynamics become nonphysical (beyond just inaccurate). 

After convergence of training, we employ our algorithm to obtain noninvertibility certificates for the resulting MLP, and plot results of $b = 2.1$ in the left subfigure of Figure \ref{2_inv}. 
In Figure \ref{2_inv}, we arbitrarily select one representative point, marked by a triangle ($\triangle$), on the attractor (the orange invariant circle); a nearby inverse {\em also} on the attractor, the {\em primal} inverse, is 
marked by a cross ($+$). Our algorithm will produce two regions for this point, one for each of our problems (squares of constant $L_{\infty}$ distance in 2D). As a sanity check, we also compute the $J_0$ sets (the red point), as well as a few additional inverses, beyond the primal ones
with the help of numerical root solver and automatic differentiation (\cite{autodiff}). Clearly, the smaller square neighborhood ``just hits" the $J_0$ curve, while the larger one extends to the closest nonprimal inverse of the attractor. 

\subparagraph{Timestep-Dependent Inverses} 
In the right two subfigures of Figure \ref{2_inv},
we explore the effect of varying the time horizon $\tau$.
We compare a single Euler step of the ground truth ODE
to the MLP approximating the same flowmap,
and find that, in both, smaller time horizons
lead to larger regions of invertibility.

\begin{figure}[H]
    \centering
    \includegraphics[width=0.37\linewidth]{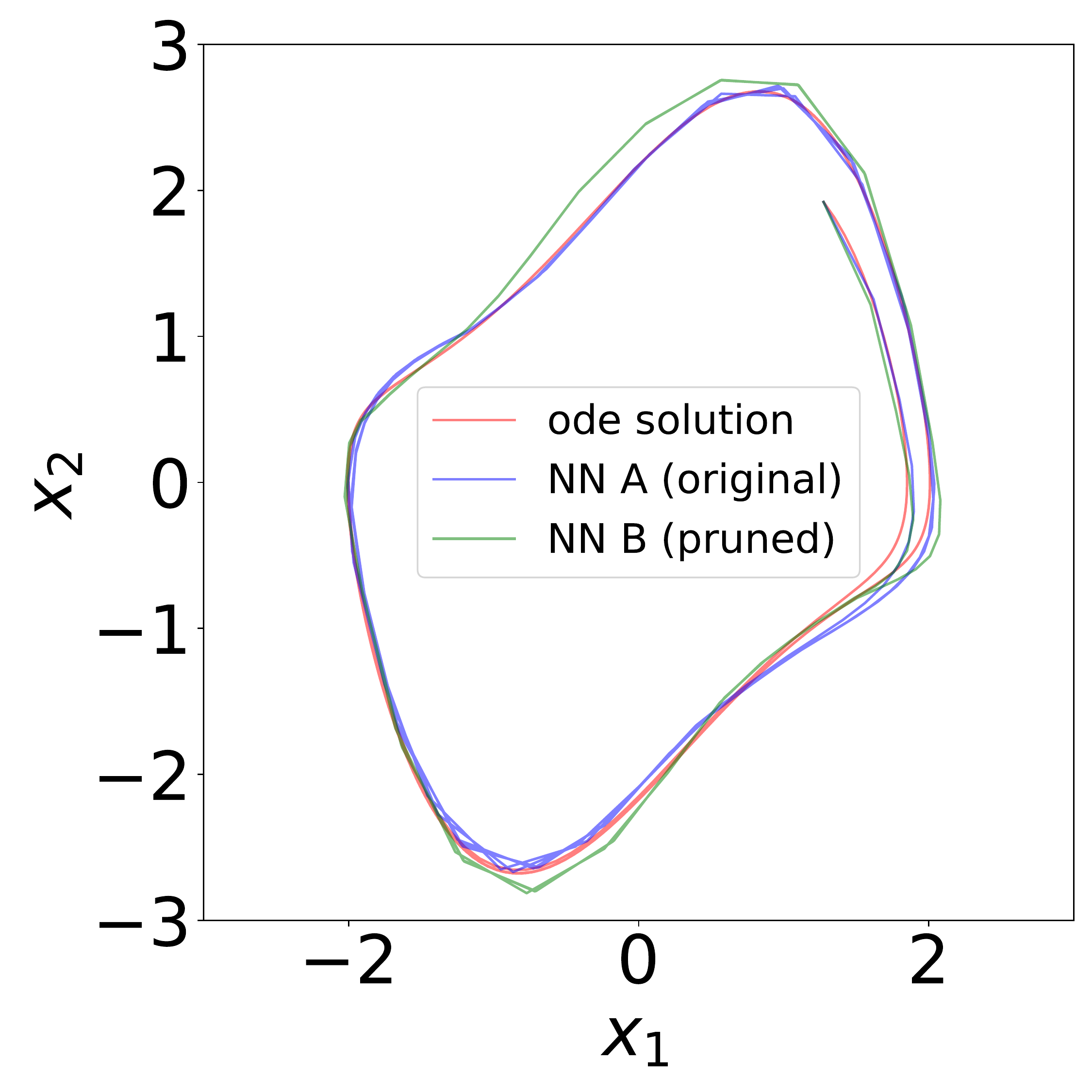}
    \includegraphics[width=0.5\linewidth]{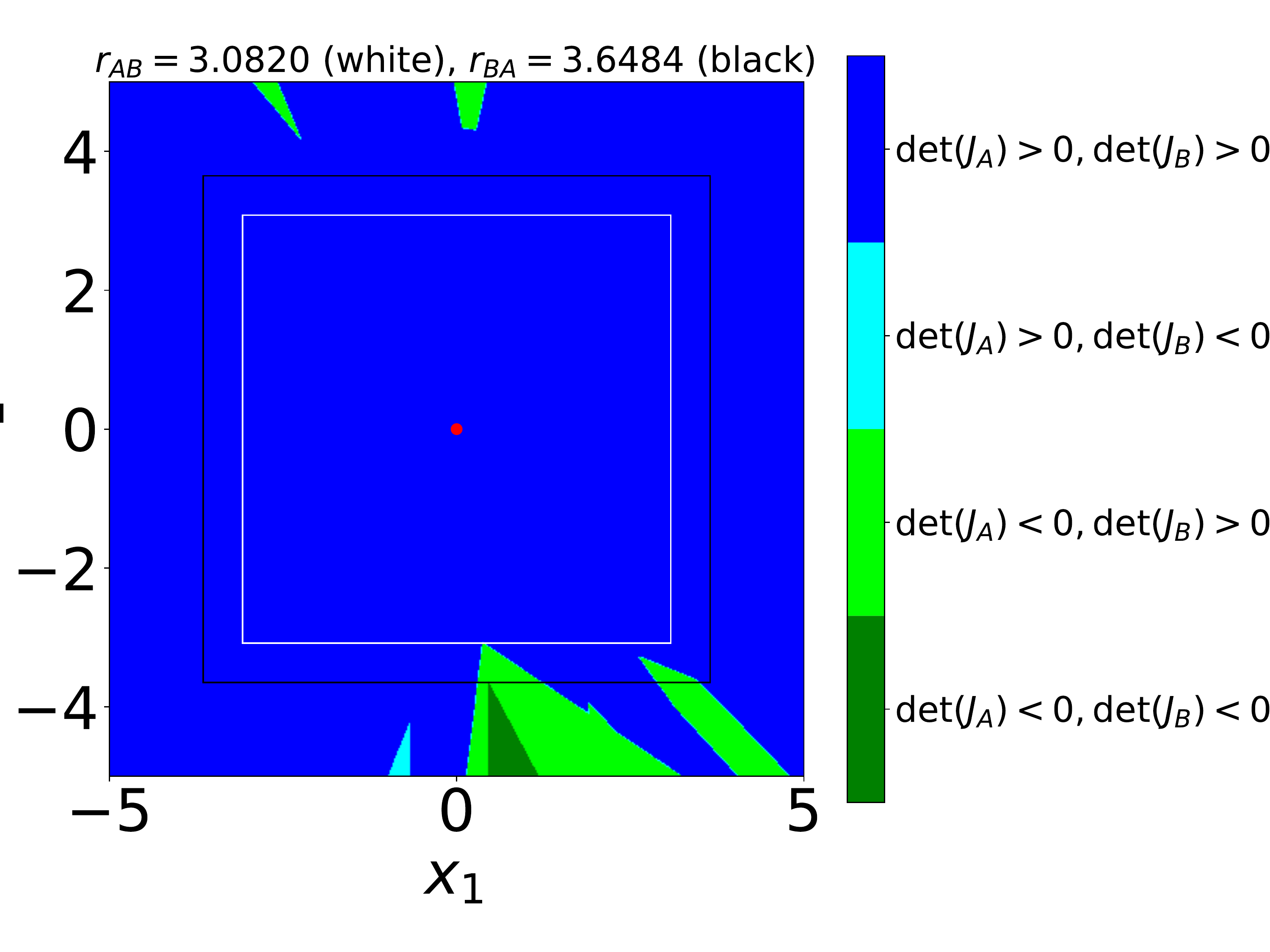}
    \caption{\small{Left: Trajectories of the ODE solution for the Van der Pol system (red), and their discrete-time neural network approximations (blue and green). All three trajectories begin at the same initial state; the ODE solution is smooth (continuous-time), the other two use straight lines between consecutive states (discrete-time). However, it is clear all three systems have nearby attractors, indicating good performance of the network and its pruned version. Right: visualization of MILP computation results, along with the sign of the Jacobian values of the networks on the grid points of the input domain. Here, the center of the region is marked red, while the white and black boundaries quantify the region of mappability between outputs of network A and network B.}}
    \label{prune}
\end{figure}

\paragraph{Network Transformation Example: Learning the Van der Pol Equation}
Here, to test our algorithm on network transformation problem \ref{problem 3}, we trained two networks on the same regression task. Our data comes from the 2D Van der Pol equation $dx_1 / dt = x_2, dx_2 / dt = \mu (1 - x_1^2) x_2 - x_1$, where the input and output are the initial and final states of 1000 solution trajectories with time duration 0.2 for $\mu = 1$, when a stable limit cycle exists.
The initial states are uniformly sampled in the region $[-3, 3] \times [-3, 3]$. The neural network A used to learn the time series is a 2-32-32-2 MLP, while the neural network B is a sparse version of A, where half of the weight entries are pruned (set to zero) 
based on \cite{prune}. To visualize the performances of the networks, two trajectories generated by respectively iterating the network functions for fixed times from a given initial state have been plotted in the left subplot of Figure \ref{prune}. The ODE solution trajectory starting at the same initial state with same time duration is also shown.
We see that both network functions A and B exhibit long-term oscillations,
though the shapes of the attractors have small visual differences from the true ODE solution (the red curve).

These two network functions were then used to test the correctness of the algorithm for the problem \ref{problem 3}. Here we chose the center points $x_c = (0, 0)^T$, computed and plotted the mappable regions for two subcases (see right subfigure of Figure \ref{prune}): the output of network $B$ is a function of  the output of network $A$ (the square with white bounds centered at the red point, radius 3.0820), and vice versa (the square with black bounds centered at the red point, radius 3.6484).
For validation
we also computed the Jacobian values of network $A$ and network $B$ on every grid point of the input domain, and shown that the white square touches the $J_0$ curve of network $A$, while the black square touches the $J_0$ curve of network $B$. 
%
Inside the black square the Jacobian of network $B$ remains positive, so that network $B$ is invertible (i.e. the existence of the mapping from $f_B(x)$ to $x$, or equivalently, $f_B^{-1} (x)$); therefore we can find the mapping from $f_B(x)$ to $f_A(x)$ by composing the mapping from $f_B(x)$ to $x$ and the mapping from $x$ to $f_A(x)$ (the function $f_A(x)$ itself). The size of the white square can be similarly rationalized, validating our computation. 

\begin{table}[H]
\centering
\begin{tabular}{c|ccc|ccc|ccc}
\toprule
Sparsity & \multicolumn{3}{c|}{40 \%} & \multicolumn{3}{|c|}{50 \%} & \multicolumn{3}{|c}{60 \%} \\
\midrule
Network $B$ & $B_1$ & $B_2$ & $B_3$ & $B_4$ & $B_5$ & $B_6$ & $B_7$ & $B_8$ & $B_9$ \\
\midrule
$r_{AB}$ & 3.0820 & 3.0820 & 3.0820 & 3.0820 & 3.0820 & 3.0820 & 3.0820 & 3.0820 & 3.0820 \\
\midrule
$r_{BA}$ & 3.4609 & 3.1055 & 3.8555 & 3.6484 & 2.6523 & 3.8203 & 3.6328 & 3.9727 & 4.5547 \\
\bottomrule
\end{tabular}
\caption{\small{The radii of the mappable regions between the original network $A$ and its pruned versions $B$.}}
\label{tab:pruneResults}
\end{table}

As a sanity check, we consructed eight more pruned networks; two of them have $50 \%$ sparsity (networks $B_5$ and $B_6$), three have $40 \%$ sparsity (networks $B_1, B_2$ and $B_3$) and the others have $60 \%$ sparsity (networks $B_7, B_8$ and $B_9$). Above, we discussed network
$B_4$. 
For each pruned network, we computed the radii of the regions of interest (aka $r_{AB}$ and $r_{BA}$). The results are listed in Table \ref{tab:pruneResults}. All  pruned networks $\{B_i\}$ share the same radii $r_{AB}$, consistent with the invertibility of $A$ itself. Since $r_A = 3.0820$, $A$ is invertible in the ball we computed, and 
the existence of the mapping $y_A \mapsto y_B$ by composition of $y_A \mapsto x$ and $x \mapsto y_B$. 
In our work the input and output dimensions are the same (e.g. $m = n$ in Problem \ref{problem 3}); this condition is not restrictive, and our algorithm can be possibly extended to classification problems, where in general  $m \gg n$.

\section{Conclusions}
In this paper, we addressed the issue of noninvertibility that arises in discrete-time dynamical systems and neural networks performing time-series related tasks. We highlighted the potential pathological consequences of such noninvertibility, which extend beyond prediction inaccuracies and affect the predicted dynamics of the networks. Moreover, we extended our analysis to transformations between different neural networks and formulated three problems that provide a quantifiable assessment of local invertibility for any arbitrarily selected input. For functions such as MLPs with ReLU activations, we formulated these problems as mixed-integer programs and performed experiments on regression tasks; we also extended our algorithm to Resnets.

In future work, we aim to develop structure-exploiting methods that can globally solve these mixed-integer programs more efficiently for larger networks. Additionally, given the linearity of convolution and average pooling operations and the piecewise linearity of max pooling, we plan to adapt our algorithm to convolutional neural networks like AlexNet  (\cite{alex}) and VGG (\cite{VGG}). Our successful application of the algorithm to ResNet architectures (\cite{resnet}) holds promise for applicability to recursive architectures (\cite{lu18d, Weinan2017APO}) such as fractal networks (\cite{larsson2017fractalnet}), poly-inception networks (\cite{zhang2016polynet}), and RevNet (\cite{gomez2017reversible}). Furthermore, we are working on making the algorithm practical for continuous differentiable activations such as tanh or Swish (\cite{swish}), and other piecewise activations such as Gaussian Error Linear Units (GELUs, \cite{gelu}). Finally, we are particularly interested in exploring the case where the input and output domains have different dimensions, such as in classifiers.

\blfootnote{Full text is available at: \url{https://arxiv.org/abs/2301.11783}.}
\bibliography{bi.bib}

\newpage
\appendix

\section{Further Discussions}

\renewcommand\thefigure{A.\arabic{figure}}
\renewcommand\thetable{A.\arabic{table}}

\setcounter{figure}{0}
\setcounter{table}{0}

\subsection{Invertibilty in View of Lipschitz Constants}
One can consider the neural network inversion problem in terms of Lipschitz continuity and the Lipschitz constant. Indeed, quantifying invertibility of a neural network (more generally, a function) is intimately connected with its Lipschitz constant.

\begin{definition}[Lipschitz continuity and Lipschitz constant]
A function $F: \mathcal{B} \subseteq \mathbb{R}^m \mapsto \mathbb{R}^m$ is Lipschitz continuous on $\mathcal{B}$ if there exists a non-negative constant
$L \geq 0$ such that
\begin{equation}
    \frac{||F(x_1) - F(x_2)||}{||x_1 - x_2||} \leq L,  \quad \forall x_1, x_2 \in \mathcal{B}, x_1 \neq x_2. \label{eq: Lip_const}
\end{equation}
The smallest such $L$ is called the Lipschitz constant of $F$, $L = \operatorname{Lip}(F)$.
\label{def: lip}
\end{definition}

A generalization for Definition \ref{def: lip} is the bi-Lipschitz map defined as follows.

\begin{definition}[bi-Lipschitz continuity and bi-Lipschitz constant]
Suppose $F: \mathcal{B} \subseteq \mathbb{R}^m \mapsto \mathbb{R}^m$ is globally Lipschitz continuous with Lipschitz constant $L$. Now we define another nonnegative constant $L' \geq 0$ such that
\begin{equation}
     L' \leq \frac{||F(x_1) - F(x_2)||}{||x_1 - x_2||}, \quad \forall x_1, x_2 \in \mathbb{R}^m, x_1 \neq x_2. \label{eq: bi_Lip}
\end{equation}
\label{def: bi-lip}
\end{definition}
\noindent
If the largest such $L'$ is strictly positive, then \eqref{eq: bi_Lip} shows $F$ is invertible on $\mathcal{B}$ due to $F(x_1) \neq F(x_2)$ given $x_1 \neq x_2$. Moreover, one could easily derive $(1 / L') = \operatorname{Lip}(F^{-1})$, where $F^{-1}$ is the inverse function of $F$. We also say $F$ is bi-Lipschitz continuous in this case, with bi-Lipschitz constant $L^{\ast} = \max \left\{L, 1 / L' \right\}$.

\subsection{Structure of Preimages for the Learned Map of the Brusselator Flow}

As discussed in the main paper, we trained a network to approximate the time-$\tau$ Euler map \eqref{ode_approx_appendix} for the Brusselator.
%
%
The attractor 
(locus of long-term image points)
is a small amplitude, stable invariant circle (IC), the discrete time analog of the ODE stable limit cycle. We mark four representative points on it ($Q$, $R$, $S$, and $T$) and divide it into parts $A$, $B_1$, $B_2$, and $C$ between these points, so as to facilitate the description of the dynamics and its multiple (due to noninvertibility) preimages. 
The locus of red points (the locus on which the determinant of the Jacobian of the network changes sign, or, in the language of noninvertible systems, the $J_0$ curve) separates state space here into five distinct regions \regionOne{},\,\ldots,\,\regionFive{}, each with different preimage behavior,
as illustrated in Figure \ref{fig:smoothPreimageStructure}.
For smooth maps, like the Brusselator forward Euler discretization or a $\tanh$ activation neural network, $J_0$ is the locus of points for which the determinant of the map Jacobian is zero (and therefore, the map is singular).
In those cases, the curve is easy to compute through continuation algorithms. 
For ReLU activations, however,  this locus is nontrivial to compute through algebraic solvers, and piecewise smooth computational techniques or brute force exploration must be used to locate it;
see the inset in Figure, where the color intensity indicates the 
magnitude, red for positive and blue for negative, of the map Jacobian determinant. 
After we locate the $J_0$ points however, we see that they define the \regionOne{} through \regionFive{} (and implicitly, through forward iteration of the $J_0$ curve, regions $A$ through $C$ on the IC):

\begin{figure}[H]
    \centering
    \includegraphics[width=0.7\linewidth]{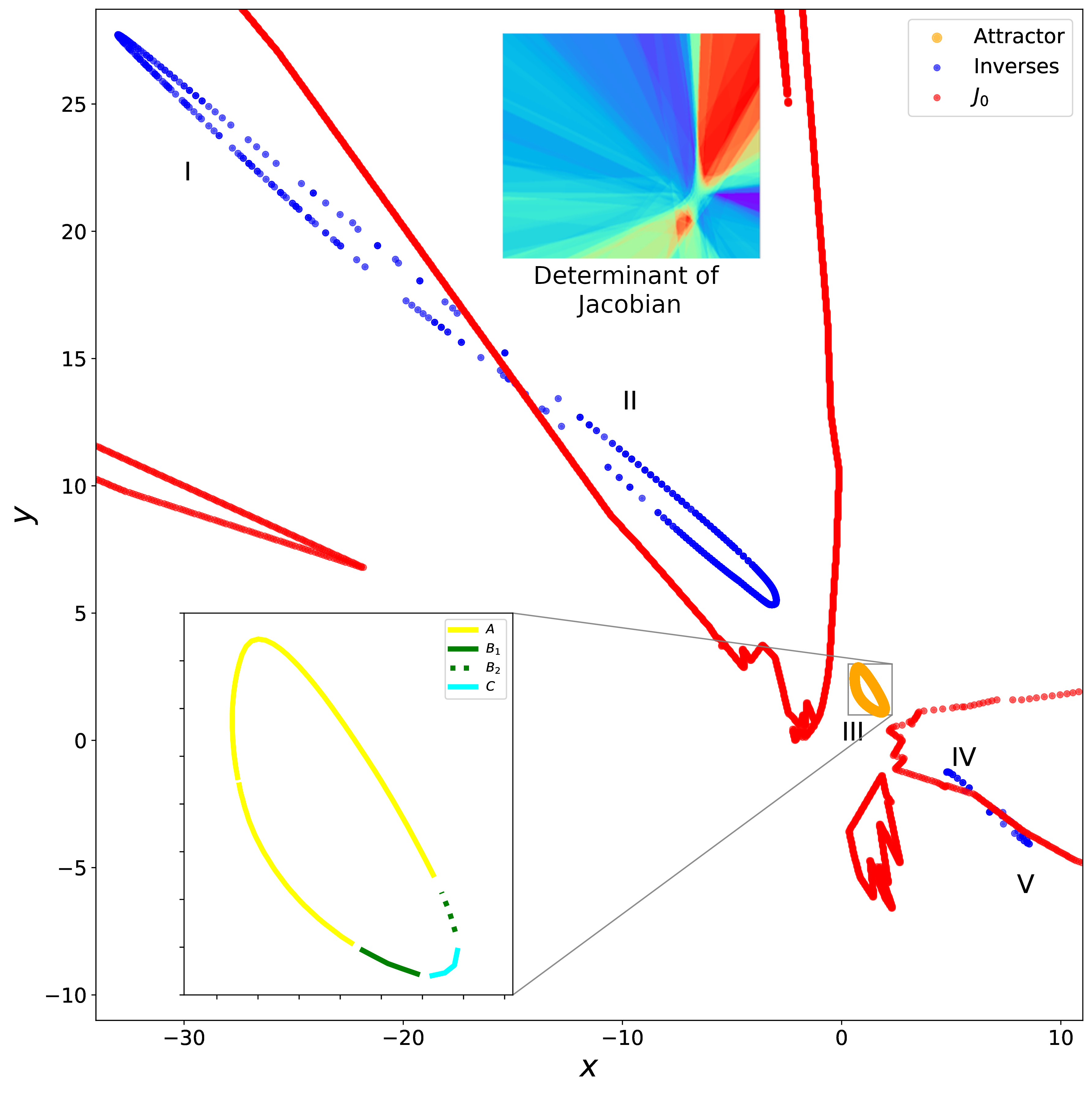}
    \hspace*{0.3cm}\includegraphics[width=0.7\linewidth]{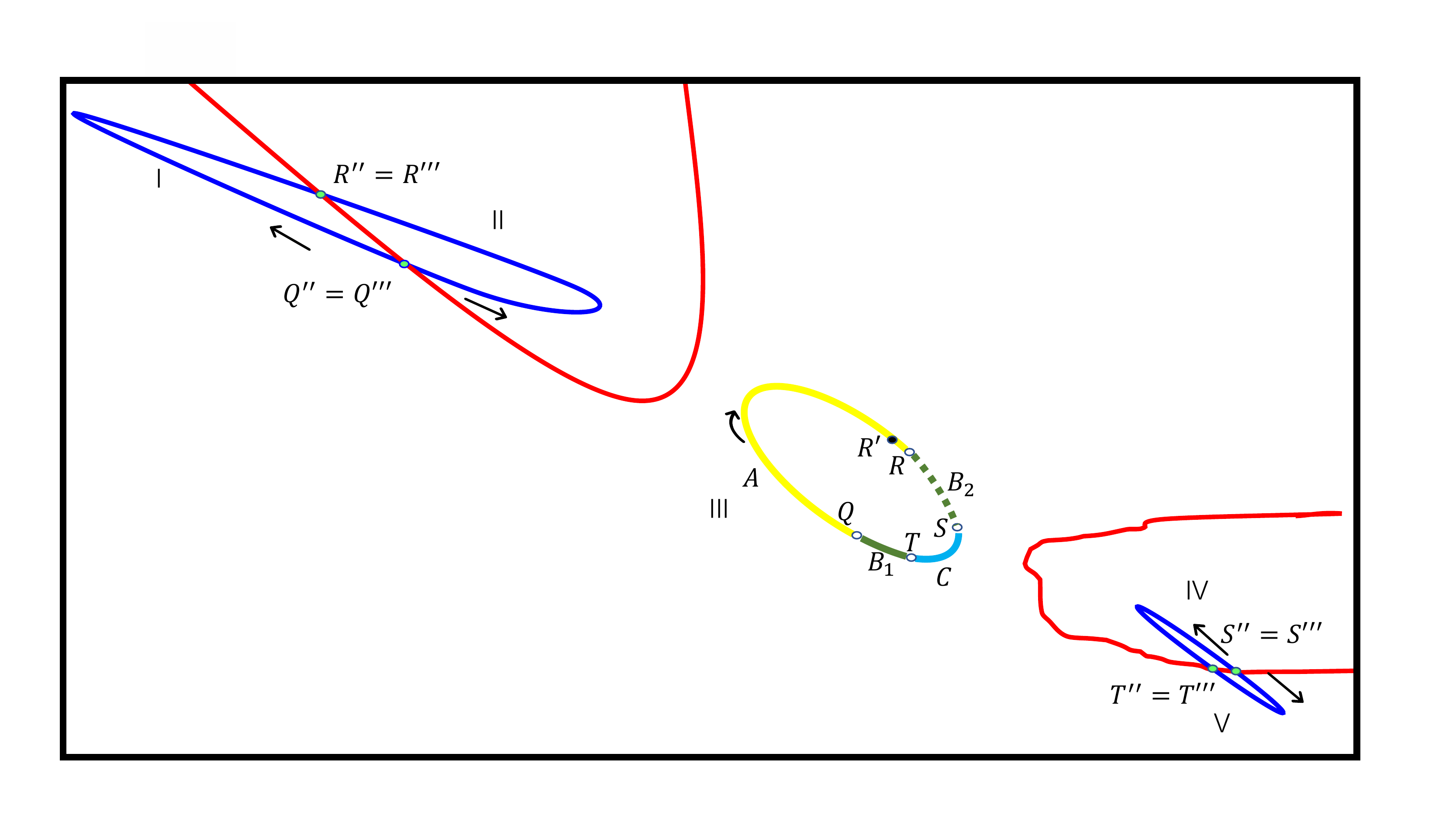}
    \caption{\small{
        \textbf{Top:} Structure of Preimages and (top inset; positive is red, and negative is blue) magnitude of the map Jacobian determinant for the Brusselator network with $b=2.1$}.
        \textbf{Bottom:} Labeling of key representative points and important regions; see text. This is a qualitative rendering of the relevant regions in the top figure, deformed to enhance visualization. 
        }
    \label{fig:smoothPreimageStructure}
\end{figure}

%
\begin{itemize}
    \item Each point in part $A$ (shown in yellow), has three inverses, located in regions \regionOne{}, \regionTwo{} and \regionThree{} respectively. The ``physically meaningful inverse", the one in \regionThree{}, is contained in the IC itself.
    \item Points in part $C$ (shown in cyan) similarly have inverses located in regions \regionThree{}, \regionFour{} and \regionFive{}. 
    \item Finally, we label two segments of the IC, located between the $A$ and $C$ segments, as $B_1$ and $B_2$ 
    (shown in dark green, with solid line and dashed lines, respectively).
    Points in these portions of the IC only have a {\em single} inverse each (that we could find within the picture): the one located {\em on the attractor itself} in region \regionThree{}.
\end{itemize}

It is informative to study the location and behavior of preimages as a phase point is moved along the invariant circle.
At the transitions from either $B_i$ part into $A$ (or $C$), two preimages (initially one preimage with multiplicity two) are born
touching the $J_0$ curve, at the junction between \regionOne{} and \regionTwo{} (or \regionFour{} and \regionFive{}).
Notice also the ``extra preimages" of the points $R$ and $Q$ ($R''$, $R'''$, $Q''$, $Q'''$) {\em off the invariant circle}, on $J_0$. The physically meaningful preimages ($R'$, $Q'$) lie on the invariant circle itself; one of them, $R'$, close to $R$, in shown in the figure.
As we move further into the $A$ (or $C$) parts of the attractor, the ``extra" two preimages separate, traverse the two blue wings of the preimage isolas, 
and then collide again on the $J_0$ curve 
as the phase point transitions from $A$ (or $C$) into the other $B_i$ part.
%

\subsection{Noninvertibility in Partially Observed Dynamic Histories}
Recall that the forward Euler discretization of the Brusselator
is a two-dimensional map
\begin{equation}  
\left\{ 
    \begin{array}{lr}  
    x_{n + 1} = x_n + \tau (a + x_n^2 y_n - (b + 1) x_n), &  \\  
    y_{n + 1} = y_n + \tau (bx_n - x_n^2 y_n). &    
    \end{array}  
\right.
\label{ode_approx_appendix}
\end{equation}

In \eqref{ode_approx_appendix}, we have two equations, but five unknowns ($x_n, y_n, x_{n + 1}, y_{n + 1}, \tau$), so the system is in principle solvable only if three of them are given. This leads to $\binom{5}{3} = 10$ possible cases, enumerated below, 
which can be thought of as generalizations of the inversion 
studied in depth in the representative paper \cite{Adomaitis1991NoninvertibilityAT, Frouzakis}.
\begin{enumerate}
    
    \item $(x_n, y_n, \tau) \Rightarrow (x_{n + 1}, y_{n + 1})$.
    \textit{(This is the usual forward dynamics case.)}
    The evolution is unique (by direct substitution into \eqref{ode_approx_appendix}).
    
    \item $(x_{n + 1}, y_{n + 1}, \tau) \Rightarrow (x_n, y_n)$. 
    \textit{(This is the case studied in depth in the paper.)}
    The backward-in-time dynamic behavior is now multi-valued. Substituting equation \eqref{yn} into the equation for $y_{n + 1}$ in system \eqref{ode_approx_appendix} we obtain
    \begin{equation}
        \tau (1 - \tau) x_n^3 + \tau (\tau a - x_{n + 1} - y_{n + 1}) x_n^2 + (\tau b + \tau - 1) x_n + (x_{n + 1} - \tau a) = 0. \label{cubic_appendix}
    \end{equation}
    \eqref{cubic_appendix} is a cubic equation w.r.t. $x_n$ if $\tau \neq 0$ and $\tau \neq 1$, which may lead to three distinct real roots, two distinct real roots (with one of them multiplicity 2), or one real root (with multiplicity 3, or with two extra complex roots). We can then substitute the solution of $x_n$ into \eqref{yn} to obtain $y_n$.

    \item $(x_n, x_{n + 1}, \tau) \Rightarrow (y_n, y_{n + 1})$. Here we know the $x$ history, and want to infer the $y$ history: create an  observer of $y$ from $x$. This is very much in the spirit of the Takens embedding theorem \cite{Takens1981}, where one uses delayed measurements of one state variable as surrogates of other, unmeasured state variables.
    For our particular Brusselator example, the $y$ dynamics inferred are unique. For the system \eqref{ode_approx_appendix}, we rearrange the equation of $x_{n + 1}$ to obtain:
    \begin{equation}
        y_n = \dfrac{x_{n + 1} - x_n + \tau (b + 1) x_n - \tau a}{\tau x_n^2}, \label{yn}
    \end{equation}
    which shows the solution for $y_n$ is unique. Substituting \eqref{yn} into \eqref{ode_approx_appendix} gives $y_{n + 1}$.

    \item $(y_n, y_{n + 1}, \tau) \Rightarrow (x_n, x_{n + 1})$. Now we use history observations for $y$ in order to infer 
    the $x$ history. The inference of the $x$ dynamic behavior is now multi-valued. From the system \eqref{ode_approx_appendix}, we can rearrange the equation of $y_{n + 1}$ to obtain
    \begin{equation}
        \tau y_n x_n^2 - \tau b x_n + (y_{n + 1} - y_n) = 0. \label{quad}
    \end{equation}
    \eqref{quad} is a quadratic equation w.r.t. $x_n$ if $\tau \neq 0$ and $y_n \neq 0$, which may lead to two distinct real roots, one real root with multiplicity 2, or two (nonphysical) complex roots. We can then substitute \eqref{quad} into \eqref{yn} to obtain $y_n$.
    
    \item $(x_n, y_{n + 1}, \tau) \Rightarrow (y_n, x_{n + 1})$. We now work with mixed, asynchronous history observations. For this particular choice of observations the inferred dynamic behavior is unique. For the system \eqref{ode_approx_appendix}, we can rearrange the equation of $y_{n + 1}$ and obtain
    \begin{equation}
        y_n = \dfrac{y_{n + 1} - \tau b x_n}{1 - \tau x_n^2}, \label{yn2}
    \end{equation}
    which shows that the solution for $y_n$ is unique. Then we can substitute \eqref{yn2} into \eqref{ode_approx_appendix} to obtain $x_{n + 1}$.
    
    \item $(y_n, x_{n + 1}, \tau) \Rightarrow (x_n, y_{n + 1})$. Interestingly, for this alternative set of asynchronous history observations, the inferred dynamic is multi-valued. From the system \eqref{ode_approx_appendix}, we can rearrange the equation of $x_{n + 1}$ and obtain
    \begin{equation}
        \tau y_n x_n^2 + (1 - \tau - \tau b) x_n + (\tau a - x_{n + 1}) = 0. \label{quad2}
    \end{equation}
    \eqref{quad2} is a quadratic equation w.r.t. $x_n$ if $\tau \neq 0$ and $y_n \neq 0$, which may lead to two distinct real roots, one real root with multiplicity 2, or two complex roots. We can then substitute \eqref{quad2} into \eqref{ode_approx_appendix} to obtain $y_{n + 1}$.
    
    \item $(x_n, y_n, x_{n + 1}) \Rightarrow (\tau, y_{n + 1})$. This is an interesting twist: several asynchronous observations, but no time label.
    Is this set of observations possible ? Does there exist a time interval $\tau$ consistent with these observations ? And how many possible $\tau$ values and possible ``history completions" exist ? 
    For this example, the inferred possible history is unique. For the system \eqref{ode_approx_appendix}, we can rearrange the equation of $x_{n + 1}$ and obtain
    \begin{equation}
        \tau = \dfrac{x_{n + 1} - x_n}{a + x_n^2 y_n - (b + 1) x_n}, \label{taux}
    \end{equation}
    which shows that the solution for $\tau$ is unique. We can then substitute \eqref{taux} into \eqref{ode_approx_appendix} to obtain $y_{n + 1}$.
    The remaining cases are alternative formulations of the same ``reconstructing history from partial observations" setting. 
    
    \item $(x_n, y_n, y_{n + 1}) \Rightarrow (\tau, x_{n + 1})$. The inferred history is again unique. For the system \eqref{ode_approx_appendix}, we can rearrange the equation of $y_{n + 1}$ and obtain
    \begin{equation}
        \tau = \dfrac{y_{n + 1} - y_n}{bx_n - x_n^2 y_n}, \label{tauy}
    \end{equation}
    which shows that the solution for $\tau$ is unique. We can then substitute \eqref{tauy} into \eqref{ode_approx_appendix} to obtain $x_{n + 1}$.
    
    \item $(y_n, x_{n + 1}, y_{n + 1}) \Rightarrow (\tau, x_n)$. The inferred history is now multi-valued. Substituting equation \eqref{taux} in the equation for $y_{n + 1}$ in system \eqref{ode_approx_appendix} we obtain:
    \begin{equation}
        y_n x_n^3 + ((y_n - y_{n + 1}) y_n - b - x_{n + 1} y_n) x_n^2 + (b x_{n + 1} + (b + 1)(y_{n + 1} - y_n)) x_n + a(y_n - y_{n + 1}) = 0. \label{cubic2}
    \end{equation}
    \eqref{cubic2} is a cubic equation w.r.t. $x_n$ if $y_n \neq 0$, which may lead to three distinct real roots, two distinct real roots (with one of them multiplicity 2), or one real root (with multiplicity 3, or with two extra complex roots). Then we could substitute the solution of $x_n$ into \eqref{taux} to obtain $\tau$.
    
    \item $(x_n, x_{n + 1}, y_{n + 1}) \Rightarrow (\tau, y_n)$. The inferred history is again multi-valued. Substituting equation \eqref{yn} in the equation for $y_{n + 1}$ in system \eqref{ode_approx_appendix} we obtain:
    \begin{equation}
        x_n^2 (a - x_n) \tau^2 + (x_n^3 - (x_{n + 1} + y_{n + 1}) x_n^2 + (b + 1) x_n -a) \tau + (x_{n + 1} - x_n) = 0. \label{quad3}
    \end{equation}
    \eqref{quad3} is a quadratic equation w.r.t. $\tau$ if $x_n \neq 0$ and $x_n \neq a$, which may lead to two distinct real roots, one real root with multiplicity 2, or two complex roots.  We can then substitute \eqref{quad3} into \eqref{yn} to obtain $y_n$. 
    
\end{enumerate}

As a demonstration,
we select the last of these cases, in which $\tau$ is an unknown,
and show that multiple consistent ``history completions", i.e. multiple roots can be found; see Table \ref{tab:historyResults}.
Roots with negative or complex $\tau$ are possible, while negative timestep could be considered as a backward-time integration, complex results have to be filtered out as nonphysical. 
The methodology and algorithms in our paper are clearly applicable in providing certifications for regions of existence of unique consistent solutions; we are currently exploring this computationally. 
\begin{table}[H]
\centering
\begin{tabular}{ccccccc}
\toprule
\multicolumn{3}{c}{Given}  & \multicolumn{4}{c}{Unknowns} \\
\cmidrule(r){1-3}
\cmidrule(r){4-7}
$x_n$ & $x_{n + 1}$ & $y_{n + 1}$ & $\tau_1$ & $\tau_2$ & $y_{n, 1}$ & $y_{n, 2}$ \\
\midrule
4.88766 & 1.62663 & 2.27734 & 0.27018 & 0.12996 & 0.06670 & -0.47845 \\
2.36082 & 3.27177 & 2.13372 & -1.51470 & 0.07929 & 0.98342 & 3.15257 \\
2.19914 & 1.97336 & 3.22943 & -1.51394 & -0.02572 & 1.18823 & 2.97282 \\
4.60127 & 2.27780 & 2.21088 & \multicolumn{2}{c}{(0.09960 $\pm$ 0.14337$i$)} & \multicolumn{2}{c}{(0.24609 $\pm$ 0.51630$i$)} \\
\bottomrule
\end{tabular}
\caption{$(x_n, x_{n + 1}, y_{n + 1}) \Rightarrow (\tau, y_n)$, where $a = 1$, $b = 2$.}
\label{tab:historyResults}
\end{table}

\subsection{Extensions to Residual Architectures}


We demonstrate that our algorithms are also applicable to residual networks with ReLU activations. 
The MILP method does not extend in a simple way
to networks with $\tanh$ or sigmoid activation,
but we show here that simple algebraic formulas,
like the residual connection, 
are addressable in this framework.
%
This follows from the fact that the identity function
is equivalent to a ReLU multi-layer perceptron (MLP) with an arbitrary number of hidden layers,
\begin{equation}
    x = g(x) - g(-x) = g(g(x)) - g(g(-x)) = g(g(g(x))) - g(g(g(-x))) = \cdots , \label{eq: relu_id}
\end{equation}
where $g(x) = \max(0,x)$ is the ReLU function. 
Because ReLU is idempotent $g(g(x)) = g(x)$, we are able to add more and more nested versions in the right side of \eqref{eq: relu_id}.
Thus one could transform a ReLU ResNet with fully-connected layers 
to a single ReLU MLP by applying the equivalence \eqref{eq: relu_id}.

\begin{proposition} \label{theorem: ResNet}
    A ReLU ResNet with $\ell$ fully-connected layers in its residual architecture is equivalent to an MLP with the same number of layers.
\end{proposition}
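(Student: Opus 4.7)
The plan is to prove this by explicit construction: take an arbitrary ReLU ResNet with $\ell$ fully-connected residual layers and exhibit a standard ReLU MLP with the same number of affine layers whose input-output map coincides with it, using only the identity $x = g(x) - g(-x)$ from \eqref{eq: relu_id}, where $g$ denotes the element-wise ReLU.

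First I would treat a single residual block $x^{(k+1)} = x^{(k)} + g(W^{(k)} x^{(k)} + b^{(k)})$. Applying \eqref{eq: relu_id} to the skip term yields
\[
x^{(k+1)} = g(x^{(k)}) - g(-x^{(k)}) + g(W^{(k)} x^{(k)} + b^{(k)}),
\]
so every term on the right is the ReLU of an affine function of $x^{(k)}$. I then stack these three branches into a single wide ReLU layer with augmented weight and bias
\[
\tilde W^{(k)} = \begin{bmatrix} I \\ -I \\ W^{(k)} \end{bmatrix}, \qquad \tilde b^{(k)} = \begin{bmatrix} 0 \\ 0 \\ b^{(k)} \end{bmatrix},
\]
together with a linear read-out $V^{(k)} = [\,I,\; -I,\; I\,]$, so that $x^{(k+1)} = V^{(k)} \, g(\tilde W^{(k)} x^{(k)} + \tilde b^{(k)})$.

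Next, I would iterate this substitution across all $\ell$ blocks. The crucial point is that each read-out $V^{(k)}$ is purely linear, so it can be folded into the weight matrix of the following wide ReLU layer via $\tilde W^{(k+1)} \leftarrow \tilde W^{(k+1)} V^{(k)}$. Consequently no extra ReLU activation is inserted between blocks, and the resulting network is an MLP of the form \eqref{eq: nn equations} with exactly $\ell$ ReLU activations followed by one final linear read-out, matching the depth of the original ResNet (while tripling the hidden widths).

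The main obstacle is bookkeeping at the boundaries and generality, not depth. I would verify: (i) the first block consumes $x^{(0)}$ directly, which is fine since $x = g(x) - g(-x)$ holds for any real vector and requires no prior activation; (ii) the final $V^{(\ell-1)}$ correctly plays the role of the MLP's terminal linear layer; and (iii) the argument extends to the case in which each residual branch contains more than one internal linear sub-layer, in which case the same identity must be inserted at each sub-layer to \emph{carry the identity path through} every inner ReLU. This last extension preserves depth but further widens the hidden dimension, and is the only step where careful case-work is needed.
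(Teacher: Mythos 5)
Your construction is correct and rests on the same key identity as the paper's proof, $x = g(x) - g(-x)$, realized by widening hidden layers with identity rails of width $m$; but you apply it to a different architecture. You convert a ResNet with a skip connection around \emph{each} single-activation block, folding the linear read-out $V^{(k)} = [\,I,\,-I,\,I\,]$ into the next block's weights, whereas the paper's proposition (see \eqref{eq: res}) concerns a \emph{single} skip wrapped around an entire $\ell$-layer branch, $y = W^{(\ell)} g(\cdots) + b^{(\ell)} + x$. That case is exactly your item (iii), which you leave as a sketch. To complete it, the paper carries the two rails $g(x)$ and $g(-x)$ unchanged through all $\ell$ activations and invokes ReLU idempotency, $g(g(x)) = g(x)$ --- the one ingredient your sketch does not name. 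Your fold-in trick yields an equally valid alternative there: recombine $g(x) - g(-x) = x$ and re-split at every layer, so the rail-to-rail weight block becomes $\begin{bmatrix} I & -I \\ -I & I \end{bmatrix}$ rather than a block-diagonal identity; this needs no idempotency and keeps the overhead at $2m$ per hidden layer. (It also sidesteps a sign slip in the paper's displayed intermediate blocks, where $-I_m$ appears on the third rail: with the idempotency route that block must be $+I_m$, since $g(-g(-x)) \neq g(-x)$ in general.) Either way the conclusion is the same: depth $\ell$ is preserved and only the hidden widths grow.
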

%

\begin{proof}
    Suppose $f: \mathbb{R}^m \mapsto \mathbb{R}^m$ is a ResNet. 
    We can rewrite $y=f(x)$ as
    \begin{equation}
    \begin{aligned}
        y = & W^{(\ell)} g(W^{(\ell-1)} g(\cdots g(W^{(0)} x + b^{(0)}) + \cdots ) + b^{(\ell-1)}) + b^{(\ell)} + x \\
        = & W^{(\ell)} g(W^{(\ell-1)} g(\cdots g(W^{(0)} x + b^{(0)}) + \cdots ) + b^{(\ell-1)}) + b^{(\ell)}  \\
        & + I_m g(I_m g(\cdots g(I_m x) + \cdots )) + (-I_m) g(I_m g(\cdots g(-I_m x) + \cdots )) \\
        = & (W^{(\ell)}, I_m, -I_m) g(
            \begin{bmatrix}
            W^{(\ell-1)} & & \\
            & I_m & \\
            & & -I_m \\
            \end{bmatrix} g(\cdots g(\begin{bmatrix}
            W^{(0)} \\
            I_m \\
            -I_m \\
            \end{bmatrix} x + \begin{bmatrix}
            b^{(0)} \\
            0_m \\
            0_m \\
            \end{bmatrix}) + \cdots ) \\
            & + \begin{bmatrix}
            b^{(\ell-1)} \\
            0_m \\
            0_m \\
            \end{bmatrix}
        ) + b^{(l)}. \label{eq: res}
    \end{aligned}
    \end{equation}
    Here, $I_m \in \mathbb{R}^{m \times m}$ is an identity matrix, and $0_m \in \mathbb{R}^m$ is a zero vector. If we denote
    \begin{equation}
    \begin{aligned}
        W'^{(0)} &= \begin{bmatrix}
        W^{(0)} \\
        I_m \\
        -I_m \\
        \end{bmatrix}, W'^{(\ell)} = (W^{(\ell)}, I_m, -I_m), \\
        W'^{(j)} &= \begin{bmatrix}
        W^{(j)} & & \\
        & I_m & \\
        & & -I_m \\
        \end{bmatrix} \text{ for } j = 1, 2, \cdots, \ell - 1, \\
        b'^{(k)} &= \begin{bmatrix}
        b^{(k)} \\
        0_m \\
        0_m \\
        \end{bmatrix} \text{ for } k = 0, 1, \cdots, \ell - 1, \text{ and } b'^{(\ell)} = b^{(\ell)}, 
    \end{aligned}
    \label{eq: res_sub}
    \end{equation}
    then the function
    \begin{equation}
        y = W'^{(\ell)} g(W'^{(\ell-1)} g(\cdots g(W'^{(0)} x + b'^{(0)}) + \cdots ) + b'^{(\ell-1)}) + b'^{(\ell)} \label{eq: res_mlp}
    \end{equation}
    is a ReLU MLP with $\ell$ layers.
\end{proof}

{\bf Structurally Invertible Networks.}
It is interesting to consider how our algorithm would perform when the network under study is invertible by architectural construction (e.g. an  invertible ResNet (``i-ResNet'', \cite{behrmann2019invertible}).
Then there is only the trivial solution to the MILP in \eqref{eq: local invertibility MIP} for any $r>0$ (two identical points). What we can do in such cases is to request a certificate of guarantee that we are sufficiently far from noninvertibility boundaries -- e.g. by a threshold larger than, say,  $10^6$.  
This is suggestive of global invertibility of the i-ResNet, 
and serves as a sanity check of the algorithm.

%

%
{\bf Computational Effort}.
In general, several key factors impact the computational time of the MILP: the input dimension $n_0$, the number of layers $\ell$, the total number of neurons $\sum_{i=1}^{\ell} n_i$, and the radius parameter $r$.
Because a multi-layer network can be approximated to desired accuracy
by a single-layer network with enough neurons, 
we will perform our experiment with a single-layer perceptron 
($\ell = 1$)
and observe the dependence of the running time on $n_0, n_1$ and $r$ by optimizing starting from multiple randomly-generated i-ResNets. 
To reduce the influence of difficult i-ResNet parameters that might cause the optimizer to stall, diverge, converge very slowly, 
or (of most concern) halt by our 30-minute timeout,
we track the median of the running times for replicate experiments.
See Figure \ref{fig: rt} for these results.
\begin{figure}[H]
	\centering
	\includegraphics[width=1\linewidth]{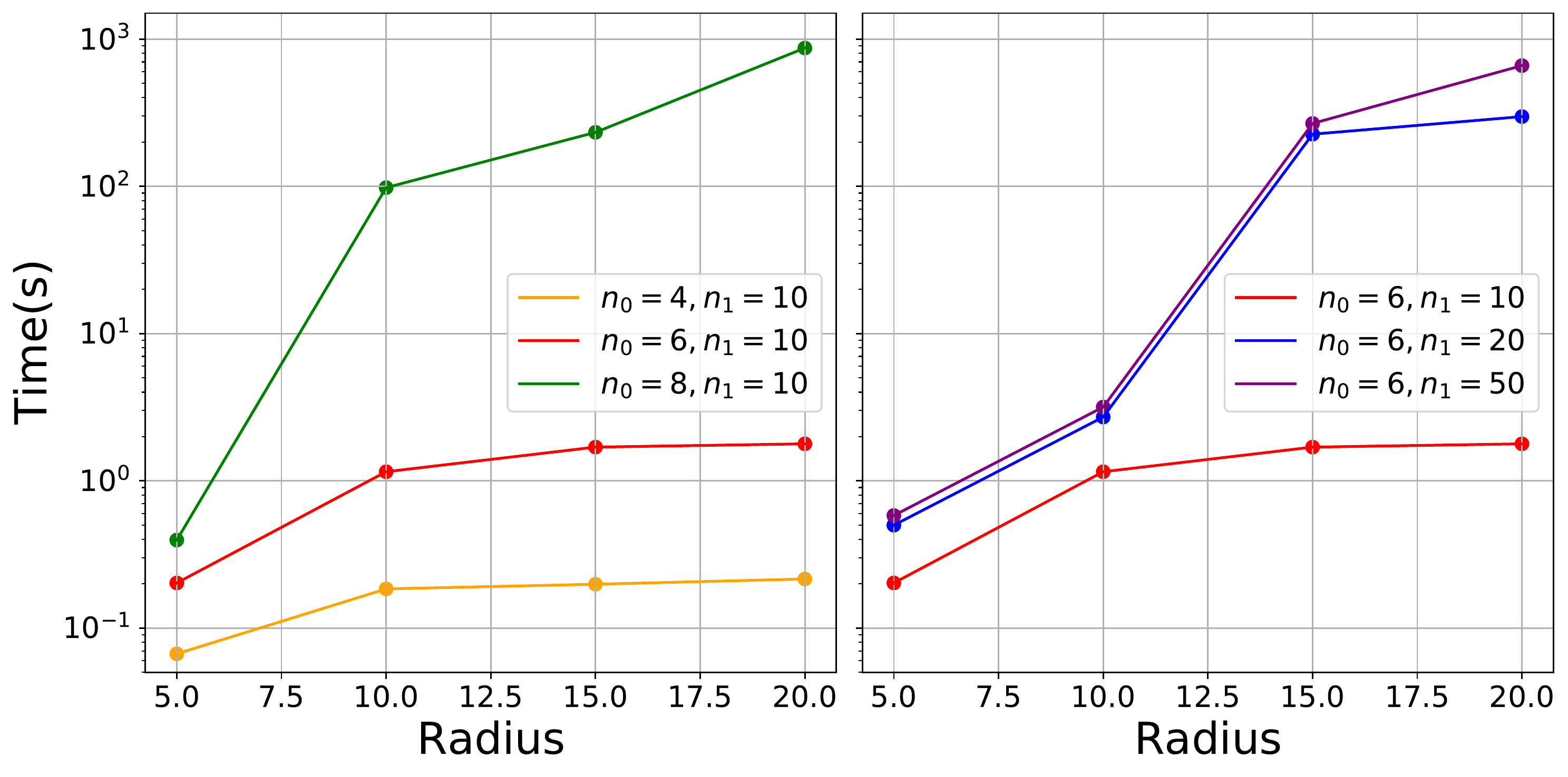}
	\caption{\small{Running time of the algorithm on a single-layer invertible ResNet as the network size varies.}}
	\label{fig: rt}
\end{figure}
\noindent
We observe that the $n_0$ hyperparameter
has a greater impact on the running time than the $n_1$ hyperparameter. 

\section{Proof of Theorems and Corollaries}


\subsection{Proposition Regarding Solutions to Problem \ref{problem 1} and Problem \ref{problem 2}}

\begin{proposition*} \label{cor radius_appendix}
    For a given function $f: \mathbb{R}^m \mapsto \mathbb{R}^m$ and a point $x_c \in \mathbb{R}^m$, if $r$ and $R$ are optimal solutions to problems \ref{problem 1} and \ref{problem 2} respectively, then we must have $r \leq R$.
\end{proposition*}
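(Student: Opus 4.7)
The plan is to show that the feasible radius set for Problem \ref{problem 1} is contained in the feasible radius set for Problem \ref{problem 2}, which immediately forces the suprema to satisfy $r \leq R$. Concretely, I would argue that any radius $\rho$ that certifies full local invertibility around $x_c$ also certifies pseudo local invertibility around $x_c$, simply by restricting the ``for all pairs'' condition to the special case where one of the points is $x_c$ itself.

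First I would fix $\rho = r$, the optimal radius from Problem \ref{problem 1}, and unpack the definition: $f(x) \neq f(y)$ for every pair of distinct points $x, y \in \mathcal{B}_q(x_c, r)$. Since $x_c \in \mathcal{B}_q(x_c, r)$ trivially (the center lies in the ball for any $\rho \geq 0$), I can instantiate this with $y = x_c$ to conclude that $f(x) \neq f(x_c)$ for every $x \in \mathcal{B}_q(x_c, r) \setminus \{x_c\}$. This is exactly the condition defining feasibility for Problem \ref{problem 2}, so $r$ is itself a feasible radius in Problem \ref{problem 2}.

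Next I would invoke the definition of $R$ as the \emph{largest} radius satisfying the pseudo-invertibility condition. Since $r$ satisfies that condition, $R \geq r$, which is the claim. To make the argument fully rigorous, one should also note the boundary case $r = 0$: the statement holds vacuously, since $R \geq 0$ always.

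I do not anticipate a real obstacle here; the proposition is a direct logical consequence of the fact that Problem \ref{problem 1}'s constraint set (``all pairs of distinct points in the ball have distinct images'') is strictly stronger than Problem \ref{problem 2}'s constraint set (``$x_c$ has no other preimage in the ball''). The only subtlety worth flagging is that we are comparing suprema of feasible radii rather than individual optima, so I would briefly justify that inclusion of feasible radius sets transfers to the supremum, which is immediate from the monotonicity of $\sup$.
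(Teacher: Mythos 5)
Your argument is correct and matches the paper's own proof: both specialize the all-pairs invertibility condition on $\mathcal{B}_q(x_c,r)$ to the pair $(x, x_c)$, conclude that $r$ is feasible for Problem 2, and then invoke maximality of $R$. Your added remarks about the $r=0$ case and the monotonicity of the supremum are harmless refinements of the same argument.
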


Consider a point $x \in \mathcal{B}_q(x_c,r) \setminus \{x_c \}$. Since $f$ is invertible on $\mathcal{B}_q(x_c,r)$, 
we must have $f(x') \neq f(x)$ for all $x' \in \mathcal{B}_q(x_c,r) \setminus \{x \}$. 
In particular, by choosing $x=x_c$, we have $f(x') \neq f(x_c)$ for all $x' \in \mathcal{B}_q(x_c,r) \setminus \{x \}$. Thus, we must have $r \leq R$.

\subsection{Proof of Theorem \ref{theorem: local_inv}}

\begin{theorem*} \label{theorem: local_inv_appendix}
	Let $f \colon \mathbb{R}^m \to \mathbb{R}^m$ be a continuous function and $\mathcal{B} \subset \mathbb{R}^m$ be a compact set. Consider the following optimization problem,
\begin{alignat}{2}
p^\star \leftarrow& \mathrm{max} \quad  && \|x-y\| \quad \text{subject to } x,y \in \mathcal{B}, \quad f(x)=f(y).
\end{alignat}
Then $f$ is invertible on $\mathcal{B}$ if and only if $p^\star =0$.
\end{theorem*}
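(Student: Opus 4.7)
The statement is essentially a reformulation of the definition of injectivity, so the proof should be short and split into three pieces: well-posedness of the optimization problem, and the two directions of the biconditional.

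First I would verify that the optimization problem \eqref{opt problem 1} is well-posed. The feasible set is nonempty since any pair $(x,x)$ with $x \in \mathcal{B}$ trivially satisfies the constraint $f(x)=f(x)$. It is also closed (and hence compact, being contained in $\mathcal{B} \times \mathcal{B}$), because continuity of $f$ makes $\{(x,y) \in \mathcal{B}\times\mathcal{B} : f(x)-f(y)=0\}$ the preimage of the closed singleton $\{0\}$ under a continuous map on a compact set. Combining this with continuity of the objective $\|x-y\|$, the maximum is attained, so $p^\star$ is well-defined and satisfies $p^\star \geq 0$.

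Next I would prove the forward direction ($\Rightarrow$): if $f$ is invertible on $\mathcal{B}$, then $f$ is injective on $\mathcal{B}$, so any feasible pair $(x,y)$ with $f(x)=f(y)$ must satisfy $x=y$. Consequently $\|x-y\|=0$ for every feasible pair, and $p^\star = 0$. For the reverse direction ($\Leftarrow$): suppose $p^\star = 0$. Then every feasible pair $(x,y)$ satisfies $\|x-y\| \leq 0$, which forces $x=y$. In other words, $f(x)=f(y)$ with $x,y \in \mathcal{B}$ implies $x=y$, which is exactly injectivity of $f$ on $\mathcal{B}$. Hence $f$ is invertible on its image restricted to $\mathcal{B}$.

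I do not anticipate any genuine obstacles. The only subtlety worth stating carefully is the existence of the optimum, which relies essentially on compactness of $\mathcal{B}$ and continuity of $f$; without these, one could only assert a supremum and the ``if'' direction would have to be phrased in terms of $\sup = 0$ being attained. Since the theorem explicitly assumes $\mathcal{B}$ compact and $f$ continuous, both directions go through cleanly, and the same argument template will apply verbatim to Theorem \ref{theorem: local_pseudo} (with the extra constraint $y = x_c$) and to Theorem \ref{theorem: trans} (replacing the objective by $\|f_2(x^{(1)})-f_2(x^{(2)})\|$ and invoking the definition of ``$f_2$ is a function of $f_1$'').
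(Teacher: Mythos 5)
Your proof is correct and follows essentially the same two-direction argument as the paper, which simply observes that $p^\star=0$ is equivalent to every feasible pair satisfying $x=y$, i.e.\ injectivity. The only addition is your well-posedness check (attainment of the maximum via compactness of $\mathcal{B}$ and continuity of $f$), which the paper omits but which is a harmless and reasonable refinement.
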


Suppose $f$ is invertible on $\mathcal{B}$. Then for all $x,y \in \mathcal{B}$ for which $f(x)=f(y)$, we must have $x = y$. Therefore, the objective function for Problem 1 is zero on the feasible set. Hence, $p^\star=0$. Conversely, suppose $p^\star=0$. Then $x=y$ for all $x,y \in \mathcal{B}$ such that $f(x)=f(y)$, hence invertibility.

\subsection{Proof of Theorem \ref{theorem: local_pseudo}}

\begin{theorem*}  \label{theorem: local_pseudo_appendix}
	Let $f \colon \mathbb{R}^m \to \mathbb{R}^m$ be a continuous function and $\mathcal{B} \subset \mathbb{R}^m$ be a compact set. Suppose $x_c \in \mathcal{B}$. Consider the following optimization problem,
	\begin{align}
		P^\star \leftarrow  \mathrm{max} \quad  \|x-x_c\| \quad
		\text{subject to } x \in \mathcal{B},  \quad f(x)=f(x_c).
	\end{align}
	Then we have $f(x) \neq f(x_c)$ for all $x \in \mathcal{B} \setminus \{x_c\}$ if and only if $P^\star =0$.
\end{theorem*}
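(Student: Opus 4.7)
The plan is to mirror the proof of Theorem \ref{theorem: local_inv} almost verbatim, exploiting the observation already made in the text: the pseudo-invertibility problem \eqref{opt problem 2} is obtained from \eqref{opt problem 1} by adjoining the equality constraint $y = x_c$. So I would first verify that the optimization problem is well-posed (the feasible set contains $x=x_c$ and is a compact subset of $\mathcal{B}$, so the maximum $P^\star$ is attained), and then argue both implications by direct unpacking of the optimization formulation.

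For the forward direction, I would assume $f(x) \neq f(x_c)$ for every $x \in \mathcal{B} \setminus \{x_c\}$. Then the constraint $f(x) = f(x_c)$ together with $x \in \mathcal{B}$ forces $x = x_c$, so the only feasible point is $x_c$ itself, at which the objective $\|x - x_c\|$ equals $0$. Since the feasible set is nonempty and reduces to a single point, $P^\star = 0$.

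For the reverse direction, I would assume $P^\star = 0$ and consider any $x \in \mathcal{B} \setminus \{x_c\}$. If $f(x) = f(x_c)$, then $x$ would be feasible with strictly positive objective value $\|x - x_c\| > 0$, contradicting $P^\star = 0$. Hence $f(x) \neq f(x_c)$ for every $x \in \mathcal{B} \setminus \{x_c\}$, which is exactly the claim.

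There is essentially no technical obstacle here; the only subtlety is ensuring the maximum is actually attained so that the value $P^\star = 0$ really does preclude the existence of a feasible point with positive objective. This is taken care of by the compactness of $\mathcal{B}$ and continuity of $f$, which make the feasible set $\{x \in \mathcal{B} : f(x) = f(x_c)\}$ a closed subset of a compact set, and by continuity of the norm, which gives the existence of a maximizer by the Weierstrass extreme value theorem. One could alternatively derive the statement as an immediate corollary of Theorem \ref{theorem: local_inv} applied to the restricted feasibility problem where $y$ is fixed to $x_c$, but writing it out directly as above is cleaner and self-contained.
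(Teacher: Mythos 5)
Your proof is correct and follows essentially the same argument as the paper's: the forward direction observes that the feasible set collapses to $\{x_c\}$, and the reverse direction derives a contradiction from a feasible point with strictly positive objective. Your additional remark on attainment of the maximum via compactness and continuity is a welcome (if minor) tightening that the paper leaves implicit.
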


Suppose $f(x) \neq f(x_c)$ for all $x \in \mathcal{B} \setminus \{x_c\}$. 
Then, the only feasible point in the  optimization of Problem 2 is $x = x_c$. 
Hence, $P^\star=0$. 
Conversely, start by assuming $P^\star=0$.
Suppose there exists a $x' \in \mathcal{B} \setminus \{x_c\}$ such that $f(x')=f(x_c)$. Then, we must have $0 < \|x'-x_c\| \ \leq P^\star=0$, which is a contradiction. Therefore, we must have $f(x) \neq f(x_c)$ for all $x \in \mathcal{B} \setminus \{x_c\}$.

\subsection{Proof of Theorem \ref{theorem: trans}}

\begin{theorem*}  \label{theorem: trans_appendix}
	Let $f_1 \colon \mathbb{R}^m \to \mathbb{R}^n$, $f_2 \colon \mathbb{R}^m \to \mathbb{R}^n$ be two continuous functions and $\mathcal{B} \subset \mathbb{R}^m$ be a compact set. 
	Consider the following optimization problem,
    \begin{align} \label{opt problem 3: appendix}
		p_{12}^\star \leftarrow  \mathrm{max} \quad  \|f_2(x^{(1)}) - f_2(x^{(2)})\| \quad
		\text{subject to } x^{(1)}, x^{(2)} \in \mathcal{B},  \quad f_1(x^{(1)}) = f_1(x^{(2)}).
	\end{align}
	Then \textbf{(a)} $f_2$ is a function of $f_1$ on $\mathcal{B}$ if and only if \textbf{(b)} $p_{12}^\star = 0$.
\end{theorem*}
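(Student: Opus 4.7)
The plan is to mirror the structure of the proofs of Theorems~\ref{theorem: local_inv} and~\ref{theorem: local_pseudo}, since the claim has the same ``optimization value equals zero iff a uniqueness property holds'' flavor. The key observation is to unpack the phrase ``$f_2$ is a function of $f_1$ on $\mathcal{B}$'': it means that whenever two inputs $x^{(1)}, x^{(2)} \in \mathcal{B}$ produce the same $f_1$-value, they must also produce the same $f_2$-value. Equivalently, there exists a map $g \colon f_1(\mathcal{B}) \to \mathbb{R}^n$ such that $f_2(x) = g(f_1(x))$ for every $x \in \mathcal{B}$; the existence of such a $g$ is equivalent to the well-definedness condition just stated.

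For the forward implication, I would assume that $f_2$ is a function of $f_1$ on $\mathcal{B}$, so that the implication $f_1(x^{(1)}) = f_1(x^{(2)}) \Rightarrow f_2(x^{(1)}) = f_2(x^{(2)})$ holds for all feasible pairs. Then the objective $\|f_2(x^{(1)}) - f_2(x^{(2)})\|$ vanishes on the entire feasible set of \eqref{opt problem 3: appendix}, so the maximum $p_{12}^\star$ is zero. Note that the feasible set is nonempty since $(x^{(1)}, x^{(2)}) = (x_c, x_c)$ trivially satisfies the constraints for any $x_c \in \mathcal{B}$, so we are not reading off the value zero from a vacuous problem.

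For the converse, I would assume $p_{12}^\star = 0$ and argue by contradiction. If $f_2$ were not a function of $f_1$ on $\mathcal{B}$, there would exist $x^{(1)}, x^{(2)} \in \mathcal{B}$ with $f_1(x^{(1)}) = f_1(x^{(2)})$ but $f_2(x^{(1)}) \neq f_2(x^{(2)})$. This pair is feasible for \eqref{opt problem 3: appendix} and attains the strictly positive objective value $\|f_2(x^{(1)}) - f_2(x^{(2)})\| > 0$, contradicting $p_{12}^\star = 0$. Hence the implication $f_1(x^{(1)}) = f_1(x^{(2)}) \Rightarrow f_2(x^{(1)}) = f_2(x^{(2)})$ does hold, and we can define $g$ on $f_1(\mathcal{B})$ unambiguously by $g(y) := f_2(x)$ for any $x \in \mathcal{B}$ with $f_1(x) = y$.

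There is no real obstacle here; the argument is essentially a restatement of the definition. The only subtlety worth flagging is that compactness of $\mathcal{B}$ together with continuity of $f_1, f_2$ guarantees that the maximum in \eqref{opt problem 3: appendix} is actually attained (the feasible set is closed as the preimage of the closed set $\{0\}$ under the continuous map $(x^{(1)}, x^{(2)}) \mapsto f_1(x^{(1)}) - f_1(x^{(2)})$ intersected with the compact set $\mathcal{B} \times \mathcal{B}$, and the continuous objective attains its supremum on a compact set). This justifies writing ``$p_{12}^\star = 0$'' rather than ``$\sup = 0$'' and ensures the contradiction step in the converse goes through with a genuine maximizer.
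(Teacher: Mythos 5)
Your proof is correct and follows essentially the same route as the paper: both arguments unpack ``$f_2$ is a function of $f_1$ on $\mathcal{B}$'' as the well-definedness condition $f_1(x^{(1)})=f_1(x^{(2)})\Rightarrow f_2(x^{(1)})=f_2(x^{(2)})$ (the paper phrases this as $f_2$ being constant on each preimage set $f_1^{-1}(y_1)$ via an intermediate lemma) and then observe that this condition holds exactly when the objective vanishes on the whole feasible set. Your added remark on compactness and attainment of the maximum is a small but welcome extra point of rigor that the paper leaves implicit.
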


We first set up a definition (with a slight abuse of notation) of preimage set to simplify our proof.

\begin{definition}
    For a given function $f \colon \mathcal{X} \mapsto \mathcal{Y}$, $\mathcal{X} \subseteq \mathbb{R}^m$, $\mathcal{Y} \subseteq \mathbb{R}^n$, the preimage of $y \in \mathcal{Y}$ is $f^{-1}(y) = \{x \in \mathcal{X} \mid f(x)=y\}$.
    %
\end{definition}

We then prove the following theorem.

\begin{theorem}
    For two functions $f_i \colon \mathcal{X} \mapsto \mathcal{Y}_i$, $\mathcal{X} \subseteq \mathbb{R}^m$, $\mathcal{Y}_i \subseteq \mathbb{R}^n$, $i=1,2$, we have \textbf{(a)} output of $f_2$ is a function of output of $f_1$ if and only if \textbf{(c)} output of $f_2$ is constant over the preimage set $f_1^{-1}(y_1)$ for all $y_1 \in \mathcal{Y}_1$.
\end{theorem}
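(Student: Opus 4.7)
The plan is to establish the equivalence (a) $\Leftrightarrow$ (c) by working directly from the set-theoretic definition of \emph{being a function of}. No continuity, compactness, or topological structure needs to be invoked; the statement is purely about how the level sets of $f_1$ interact with the values of $f_2$.

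For the forward direction (a) $\Rightarrow$ (c), I would begin by unpacking the assumption that the output of $f_2$ is a function of the output of $f_1$: there exists $h \colon \mathcal{Y}_1 \to \mathcal{Y}_2$ with $f_2 = h \circ f_1$ on $\mathcal{X}$. Fixing any $y_1 \in \mathcal{Y}_1$ and any two points $x, x' \in f_1^{-1}(y_1)$, a one-line substitution $f_2(x) = h(f_1(x)) = h(y_1) = h(f_1(x')) = f_2(x')$ shows constancy of $f_2$ on the preimage. Since $y_1$ is arbitrary, (c) follows.

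For the reverse direction (c) $\Rightarrow$ (a), I would explicitly construct the required $h$. For each $y_1 \in f_1(\mathcal{X})$ the preimage $f_1^{-1}(y_1)$ is nonempty, and by (c) the value $f_2(x)$ is independent of the choice of $x \in f_1^{-1}(y_1)$; define $h(y_1)$ to be this common value. On the remainder $\mathcal{Y}_1 \setminus f_1(\mathcal{X})$, extend $h$ by any arbitrary assignment. The identity $f_2(x) = h(f_1(x))$ then holds for every $x \in \mathcal{X}$ by construction.

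The only point requiring care is the well-definedness of $h$ in the reverse direction: different representatives of $f_1^{-1}(y_1)$ must yield the same $f_2$-value, which is exactly hypothesis (c). I do not anticipate any genuine obstacle; morally this is the universal property of the quotient by the equivalence relation $x \sim x' \Leftrightarrow f_1(x) = f_1(x')$. Once this lemma is in hand, Theorem \ref{theorem: trans} follows immediately: the constraint $f_1(x^{(1)}) = f_1(x^{(2)})$ together with $p_{12}^\star = 0$ is equivalent to $f_2$ being constant on every preimage $f_1^{-1}(y_1)$, which by the equivalence just proved is equivalent to $f_2$ being a function of $f_1$ on $\mathcal{B}$.
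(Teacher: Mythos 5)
Your proof is correct and takes essentially the same route as the paper's: both directions reduce to the observation that constancy of $f_2$ on each level set of $f_1$ is exactly the well-definedness of the induced map $h$ with $f_2 = h \circ f_1$. Your write-up is marginally more explicit than the paper's (you construct $h$ directly and handle points of $\mathcal{Y}_1$ outside the image of $f_1$, whereas the paper argues the forward direction by contradiction), but there is no substantive difference in approach.
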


\begin{proof}
    We will show the equivalence of \textbf{(a)} and \textbf{(c)}.
    
    \textbf{(c)} $\Rightarrow$ \textbf{(a)}: If $f_1^{-1}(y_1)$ is a singleton $\{x_1\}$, then $f_2(x_1)=y_2 \in \mathcal{Y}_2$ is the only value corresponding to $y_1$. Otherwise, we could arbitrarily choose two different values $x_{A}, x_{B} \in f_1^{-1}(y_1)$, and we must have $f_2(x_A) = f_2(x_B)=y_2 \in \mathcal{Y}_2$. Therefore, we can find a unique $y_2 \in \mathcal{Y}_2$ that corresponds to the given $y_1$, which infers the existence of a mapping from $\mathcal{Y}_1$ to $\mathcal{Y}_2$.
    
    \textbf{(a)} $\Rightarrow$ \textbf{(c)}: We prove this by contradiction. Suppose $f_2$ is a function of output of $f_1$, and $\exists y_1 \in \mathcal{Y}_1$ and $\exists x_A, x_B \in f_1^{-1}(y_1)$ such that $f_2(x_A) \neq f_2(x_B)$ (i.e. $f_2$ is constant over $f_1^{-1}(y_1)$). Therefore, we can find a $y_1 \in \mathcal{Y}_1$ simultaneously corresponding to two different values $f_2(x_A)$ and $f_2(x_B)$ in $\mathcal{Y}_2$, showing the contradiction with \textbf{(a)}.
\end{proof}

It is not hard to show \textbf{(b)} ``$p_{12}^* = 0$'' in \eqref{opt problem 3: appendix} is equivalent with the statement that $f_2(x)$ is constant for $\forall x \in f_1^{-1}(f_1(x))$, which is just rephrasing of \textbf{(c)} by denoting $y_1 = f_1(x)$, and therefore, we show the equivalence of \textbf{(a)} and \textbf{(b)}.

\end{document}